\theoremstyle{definition}
\newtheorem{definition}{Definition}[section]
\theoremstyle{assumption}
\newtheorem{assumption}{Assumption}[section]
\newtheorem{theorem}{Theorem}[section]
\newtheorem{lemma}[theorem]{Lemma}
\theoremstyle{remark}
\begin{document}

%%
%% The "title" command has an optional parameter,
%% allowing the author to define a "short title" to be used in page headers.
% \title{XTNet: Scalable Counterfactual Causal Inference for Multi-Category, Multi-Valued Treatments}
\title{Cross-Treatment Effect Estimation for Multi-Category, Multi-Valued Causal Inference via Dynamic Neural Masking}
% \title{XTNet: Modeling Cross-Treatment Effect in Multi-Category, Multi-Valued Counterfactual Causal Inference}

%%
%% The "author" command and its associated commands are used to define
%% the authors and their affiliations.
%% Of note is the shared affiliation of the first two authors, and the
%% "authornote" and "authornotemark" commands
%% used to denote shared contribution to the research.
\author{Xiaopeng Ke}
% \authornote{Both authors contributed equally to this research.}
% \email{trovato@corporation.com}
% \orcid{1234-5678-9012}
% \author{G.K.M. Tobin}
% \authornotemark[1]
% \email{webmaster@marysville-ohio.com}
\affiliation{%
  \institution{Didi Chuxing}
  \city{Beijing}
  % \state{Ohio}
  \country{China}
}\email{kexiaopeng@didiglobal.com}

\author{Yihan Yu}
\affiliation{%
  \institution{Tsinghua University}
  \city{Beijing}
  \country{China}}
\email{yu-yh24@mails.tsinghua.edu.cn}

\author{Ruyue Zhang}
\affiliation{%
  \institution{Didi Chuxing}
  \city{Beijing}
  % \state{Ohio}
  \country{China}
}\email{zhangruyue@didiglobal.com}

\author{Zhishuo Zhou}
\affiliation{%
  \institution{Didi Chuxing}
  \city{Beijing}
  % \state{Ohio}
  \country{China}
}\email{zhouzhishuo@didiglobal.com}

\author{Fangzhou Shi}
\affiliation{%
  \institution{Didi Chuxing}
  \city{Beijing}
  % \state{Ohio}
  \country{China}
}\email{arkshifangzhou@didiglobal.com}

\author{Chang Men}
\affiliation{%
  \institution{Didi Chuxing}
  \city{Beijing}
  % \state{Ohio}
  \country{China}
}\email{menchang@didiglobal.com}

\author{Zhengdan Zhu}
\affiliation{%
  \institution{Didi Chuxing}
  \city{Beijing}
  % \state{Ohio}
  \country{China}
}\email{zhuzhengdan@didiglobal.com}

%%
%% By default, the full list of authors will be used in the page
%% headers. Often, this list is too long, and will overlap
%% other information printed in the page headers. This command allows
%% the author to define a more concise list
%% of authors' names for this purpose.
\renewcommand{\shortauthors}{Xiaopeng Ke et al.}

%%
%% The abstract is a short summary of the work to be presented in the
%% article.
\begin{abstract}
% Counterfactual causal inference has seen widespread adoption across various domains, yet accurately estimating outcomes under multi-valued, multi-category treatments remains a significant challenge, particularly in real-world industrial applications. The complex and often unknown interactions between different treatment types make modeling their cross-effects difficult. Most existing approaches are limited to binary or single-type treatments, making them ill-suited for scenarios with more complex intervention structures. Furthermore, current methods often lack the scalability and flexibility required to handle cross-effects in multi-category, multi-valued treatment settings.

% To address these limitations, we propose XTNet, a novel deep neural architecture specifically designed for precise effect estimation with multi-valued, multi-category treatments. Our approach introduces a cross-effect estimation module, leveraging dynamic masking techniques to flexibly and efficiently model complex treatment interactions. To facilitate fair and comprehensive evaluation, we also introduce a new metric, MCMV-AUCC, that better captures the performance of uplift models in intricate treatment scenarios. Extensive experiments on both real-world and synthetic datasets demonstrate that XTNet consistently outperforms state-of-the-art baselines in terms of accuracy and robustness.

Counterfactual causal inference faces significant challenges when extended to multi-category, multi-valued treatments, where complex cross-effects between heterogeneous interventions are difficult to model. Existing methodologies remain constrained to binary or single-type treatments and suffer from restrictive assumptions, limited scalability, and inadequate evaluation frameworks for complex intervention scenarios.

We present XTNet, a novel network architecture for multi-category, multi-valued treatment effect estimation. Our approach introduces a cross-effect estimation module with dynamic masking mechanisms to capture treatment interactions without restrictive structural assumptions. The architecture employs a decomposition strategy separating basic effects from cross-treatment interactions, enabling efficient modeling of combinatorial treatment spaces. We also propose MCMV-AUCC, a suitable evaluation metric that accounts for treatment costs and interaction effects. Extensive experiments on synthetic and real-world datasets demonstrate that XTNet consistently outperforms state-of-the-art baselines in both ranking accuracy and effect estimation quality. The results of the real-world A/B test further confirm its effectiveness.
\vspace{-10pt}
\end{abstract}

%%
%% The code below is generated by the tool at http://dl.acm.org/ccs.cfm.
%% Please copy and paste the code instead of the example below.
%%
\begin{CCSXML}

\end{CCSXML}

%%
%% Keywords. The author(s) should pick words that accurately describe
%% the work being presented. Separate the keywords with commas.
\keywords{Multiple Treatments, Deep Learning, Causal Inference}
%% A "teaser" image appears between the author and affiliation
%% information and the body of the document, and typically spans the
%% page.
% \begin{teaserfigure}
%   \includegraphics[width=\textwidth]{sampleteaser}
%   \caption{Seattle Mariners at Spring Training, 2010.}
%   \Description{Enjoying the baseball game from the third-base
%   seats. Ichiro Suzuki preparing to bat.}
%   \label{fig:teaser}
% \end{teaserfigure}

% \received{20 February 2007}
% \received[revised]{12 March 2009}
% \received[accepted]{5 June 2009}

%%
%% This command processes the author and affiliation and title
%% information and builds the first part of the formatted document.
\maketitle

\section{Introduction}

\begin{figure}[t]
    \centering
    \includegraphics[width=\linewidth]{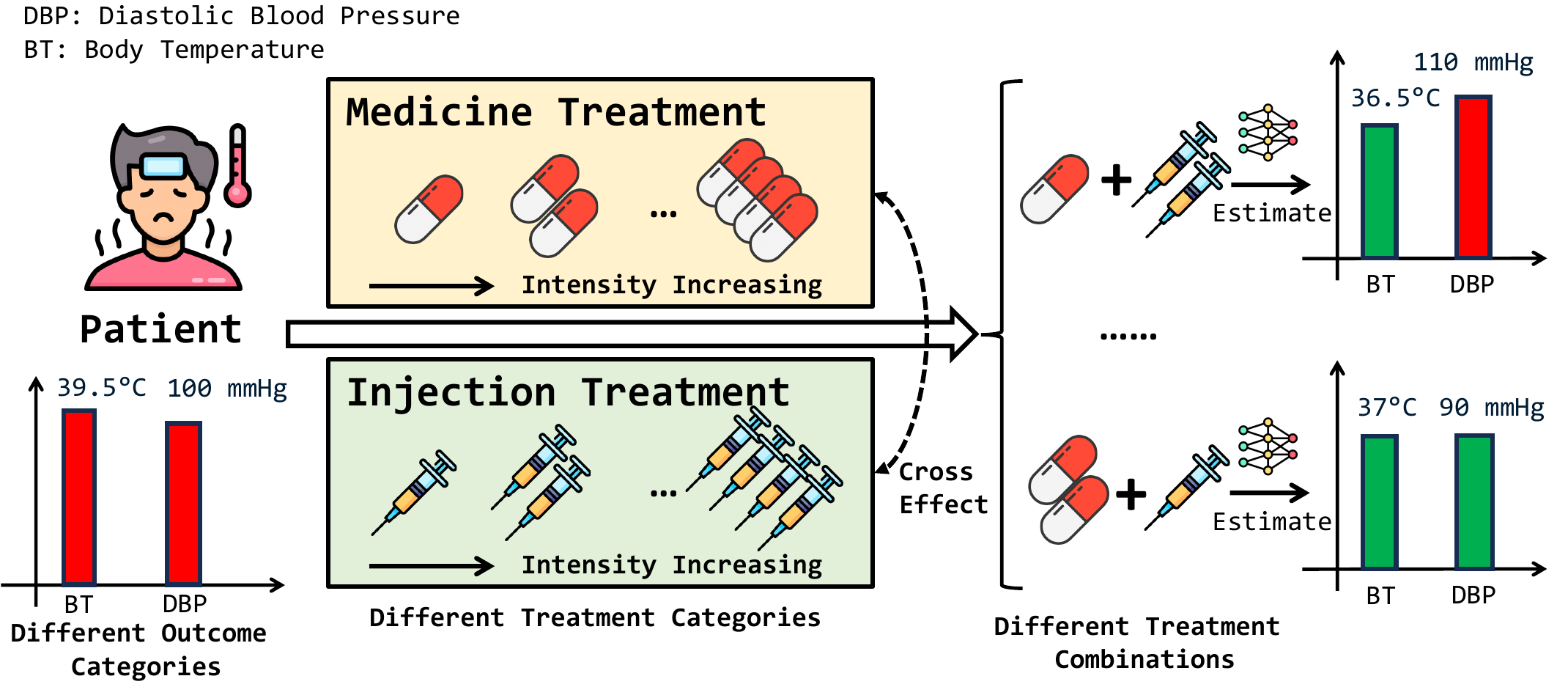}
    \caption{Multi-Category, Multi-Valued Treatment Scenario: An illustrative example with two distinct treatment types. The objective is to estimate patient outcomes (e.g., body temperature) under different treatment combinations.}
    \vspace{-20pt}
    \label{fig:mdmt-scene}
\end{figure}

Causal inference serves as a fundamental pillar for data-driven decision-making by enabling the identification and quantification of cause-and-effect relationships. It finds extensive applications across diverse domains, including healthcare \cite{prosperi2020causal}, e-commerce \cite{mondal2022aspire}, and ride-hailing platforms \cite{yu_multi-class_2024}. In these complex environments, accurately estimating intervention effects is paramount for optimizing resource allocation and strategic decision-making.

However, real-world interventions exhibit substantial complexity that stems from diverse treatment categories with multiple value options. Successfully deploying causal inference techniques in such applications necessitates the handling of sophisticated multi-category, multi-valued treatments, where treatments across different categories may exhibit intricate interactions that influence outcomes. Consequently, modeling individual treatment categories in isolation proves insufficient. To illustrate this complexity, consider the medical scenario depicted in Figure~\ref{fig:mdmt-scene}, where patients receive varying treatment intensities across two categories (injection and medication), affecting multiple outcomes (body temperature and blood pressure). The interdependence between these treatments can significantly influence final outcomes, making accurate treatment effect estimation critical for optimal patient care.

\begin{figure}
    \centering
    \includegraphics[width=\linewidth]{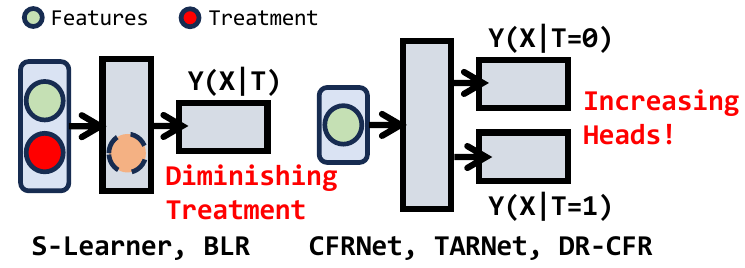}
    \vspace{-20pt}
    \caption{Current Model Architectures}
    \vspace{-20pt}
    \label{fig:recent-archs}
\end{figure}

The complexity inherent in real-world treatments presents formidable challenges for existing methodologies, which predominantly address simpler configurations such as binary \cite{rosenbaum1983central,athey2016recursive,wager2018estimation,johansson2016learning,shalit2017estimating} or single-category treatments \cite{li2022causal, schwab2018perfect}. We identify three primary challenges: (i) \textbf{Structural Scalability}, (ii) \textbf{Accurate Cross-Treatment Effect Estimation}, and (iii) \textbf{Proper Evaluation Metrics}. As illustrated in Figure~\ref{fig:recent-archs}, existing model architectures are ill-suited for complex treatment scenarios, leading to exponential parameter growth with treatment complexity and substantial computational overhead. While some approaches incorporate treatments as input features, this strategy often attenuates treatment effects and degrades estimation accuracy. Furthermore, all of these methods neglect cross-category interaction effects, compromising the precision of outcome predictions. Additionally, current evaluation metrics (e.g., Qini, AUUC) are inadequate for these complex scenarios because these metrics do not consider global ordering accuracy across different treatment combinations.

To address these limitations, we propose \textbf{XTNet} (Cross Treatment Network), a unified neural architecture specifically designed for multi-category, multi-valued treatment effect estimation. XTNet incorporates three key innovations: (1) \textbf{BasicNet} for establishing baseline treatment effects, (2) \textbf{EffectNet} for explicitly modeling cross-category interactions through dynamic masking mechanisms, and (3) \textbf{MaskNet} for generating treatment-specific parameter masks to enhance scalability. To enable proper evaluation in these complex settings, we introduce \textbf{MCMV-AUCC} (Multi-Category, Multi-Valued Area Under the Uplift Curve), a cost-aware metric specifically tailored for multi-category, multi-valued treatment scenarios. Through rigorous probabilistic analysis, we demonstrate that MCMV-AUCC achieves lower expected metric error by effectively incorporating treatment costs and marginal returns.

Our comprehensive experimental evaluation encompasses three synthetic datasets and one real-world dataset. Across all benchmarks, XTNet consistently achieves the lowest ranking error and highest MCMV-AUCC scores compared to state-of-the-art baselines. We further validate our design choices through extensive ablation studies examining loss terms and architectural components. Additionally, we present A/B test results from a production system that confirm XTNet's practical effectiveness.

Our primary contributions are as follows:
\begin{itemize}[leftmargin=*]
    \item We introduce and formalize the multi-category, multi-valued treatment effect estimation problem, representing the first systematic treatment of this challenging scenario in causal inference applications.
    \item We propose XTNet, a unified neural architecture that eliminates the need for separate models per treatment combination through innovative dynamic masking techniques. We also introduce MCMV-AUCC, a theoretically grounded evaluation metric proven to achieve lower expected error than existing approaches for multi-category, multi-valued scenarios.
    \item We demonstrate XTNet's superior performance through extensive empirical validation on both synthetic and real-world datasets, supported by comprehensive ablation studies and real-world A/B testing results.
\end{itemize}
\section{Related Works}
\subsection{Deep Causal Inference}
% Recently, many research works start to utilize the deep neural network for causal inference. Traditional methods, such as propensity score matching \cite{rosenbaum1983central} and instrumental variables \cite{angrist1996identification}, rely on strong assumptions and parametric models. However, with the increasing complexity of real-world data, deep learning has emerged as a powerful tool for causal inference due to its ability to learn high-dimensional representations and capture nonlinear relationships.

Deep causal inference represents methods that leverage deep neural networks to estimate causal effects. Due to confounding bias, these methods typically aim to learn balanced representations. Counterfactual Regression (CFR) \cite{shalit_estimating_2017} proposed the Wasserstein and Maximum Mean Discrepancy (MMD) distance loss to balance hidden features and reduce selection bias in observational data. SITE \cite{Yao2018site} utilized local similarity to balance distributions through hard sample selection in mini-batches. Similarly, the disentangled representation technique is designed to separately model factors influencing treatment assignment and outcomes \cite{hassanpour2019drcfr}. GANITE \cite{yoon2018ganite} used Generative Adversarial Networks (GANs) to impute individual counterfactual outcomes while mitigating data bias. Causal Transformer \cite{melnychuk2022causal} also applied the adversarial learning technique and achieves counterfactual estimation from longitudinal data.

% However, as we will discuss in the following sections, most existing methods remain limited in their ability to handle multi-value and multi-category treatments. Following the line of balanced representation learning, we design XTNet to address this limitation.  

\subsection{Multi-Valued Treatment Causal Inference}

% Estimating causal effects when interventions take multiple discrete values has gained increasing attention, yet challenges remain in the network architecture.  

Perfect Match \cite{schwab2018perfect} augmented minibatches with propensity-matched neighbors, offering easy implementation in multi-treatment settings. Subsequently, NCoRE \cite{parbhoo2021ncore} estimated counterfactual effects of combined treatments through branched conditional representations with learned interaction modulators. MEMENTO \cite{mondal2022memento} used confounder matching representations and built a framework to handle uplift modeling in multi-treatment scenarios. Later, a Multi-gate Mixture-of-Experts based network \cite{sun2024m3tn} was proposed to address limitations of existing methods through efficient feature representation and explicit uplift reparameterization modules.

% These models share a common characteristic: when extending to multi-value treatments through multiple output heads, each head represents the response corresponding to a specific treatment value. However, when dealing with numerous treatment categories, the available data for each individual treatment becomes sparse. 
% And the data sparsity issue is more pronounced in scenarios with multi-category multi-value treatments. This fundamental limitation of multi-head architectures ultimately compromises the model's ability to make precise individualized treatment effect predictions across all possible treatment options.

%This architectural design consequently leads to insufficient training of the output heads, as the limited sample size per treatment value fails to provide adequate learning signals for the respective prediction modules. The data sparsity issue is more pronounced in scenarios with multi-category treatments, where the type of treatments is various and the model's capacity to learn discriminative patterns for each treatment variant is significantly constrained by the reduced effective training instances per head. 

\subsection{Multi-Category Treatment Causal Inference}

% Recent advances in causal inference have also expanded the scope to multi-category treatment settings. 

An early approach \cite{zou2020counterfactual} leveraged low-dimensional latent treatment representations to decorrelate treatments from confounders, but their variational re-weighting (VSR) method cannot handle multi-valued treatment intensities within categories. Subsequent work \cite{ma2021multi} improved interpretability through disentangled representations while maintaining the binary treatment constraint. Their framework, though valuable for understanding causal structures, inherits the same limitation regarding treatment value granularity. SCP \cite{qian2021estimating} addressed data scarcity in multi-cause settings, but required untenable treatment ordering assumptions for concurrent interventions like combination therapies. The most recent MTMT \cite{wei2024multi} framework attempts to circumvent these issues by decomposing multi-valued treatment effects into binary treatment indicators and continuous intensities. This approach implicitly assumes effect scales are comparable across qualitatively different treatments.

% These sequential developments reveal an unresolved need for architectures that simultaneously: (1) avoid restrictive causal assumptions, (2) accommodate both categorical and continuous treatment values, and (3) explicitly model cross-treatment interactions. Current methods address at most two of these requirements, leaving a critical gap for applications requiring flexible, assumption-light estimation of complex multi-category treatment effects. 

%The absence of such frameworks particularly impacts domains like precision medicine and multi-channel marketing, where treatment combinations exhibit both categorical diversity and continuous intensity variations.

\section{Preliminary}

\subsection{Notation and Definitions}

We establish our notation for the multi-category, multi-valued treatment framework. Let $\mathcal{D} = \{(x_i, \mathbf{t}_i, \mathbf{y}_i)\}_{i=1}^n$ denote our dataset of $n$ units, where:

\begin{itemize}[leftmargin=*]
    \item $x_i \in \mathcal{X} \subseteq \mathbb{R}^d$ represents the $d$-dimensional covariate vector for unit $i$
    \item $\mathbf{t}_i = (t_i^{(1)}, t_i^{(2)}, \ldots, t_i^{(m)}) \in \mathcal{T}$ represents the multi-category treatment vector
    \item $\mathbf{y}_i = (y_i^{(1)}, y_i^{(2)}, \ldots, y_i^{(s)}) \in \mathcal{Y} \subseteq \mathbb{R}^s$ represents the $s$-dimensional outcome vector
\end{itemize}

The treatment space is defined as $\mathcal{T} = \mathcal{T}^{(1)} \times \mathcal{T}^{(2)} \times \cdots \times \mathcal{T}^{(m)}$, where each category $k \in \{1, 2, \ldots, m\}$ has treatment space $\mathcal{T}^{(k)} = \{0, 1, 2, \ldots, a_k\}$. Here, $t^{(k)} = 0$ represents the no-treatment baseline for category $k$, while $t^{(k)} > 0$ represents different intensity levels of intervention.

For any treatment combination $\mathbf{t} \in \mathcal{T}$ and covariate $x \in \mathcal{X}$, we denote the potential outcome as $\mathbf{Y}(x, \mathbf{t}) = (Y^{(1)}(x, \mathbf{t}), \ldots, Y^{(s)}(x, \mathbf{t}))$, representing the outcome that would be observed if a unit with covariate $x$ received treatment combination $\mathbf{t}$.

\subsection{Problem Formulation}

Our primary objective is to estimate the \emph{conditional average treatment effect} (CATE) in the multi-category, multi-valued setting. Unlike binary treatments where CATE compares treated and control outcomes, multi-category treatments require comparing arbitrary treatment combinations.

\begin{definition}[Multi-Category CATE]
\label{def:mc_cate}
For covariates $x \in \mathcal{X}$ and treatment combinations $\mathbf{t}, \mathbf{t}' \in \mathcal{T}$, the multi-category conditional average treatment effect is:
\begin{equation}
\text{CATE}(x; \mathbf{t}, \mathbf{t}') = \mathbb{E}[\mathbf{Y}(x, \mathbf{t}) - \mathbf{Y}(x, \mathbf{t}') | X = x]
\end{equation}
\end{definition}

Of particular interest is the treatment effect relative to the no-treatment baseline $\mathbf{t}_0 = (0, 0, \ldots, 0)$:
\begin{equation}
\text{CATE}(x; \mathbf{t}) = \mathbb{E}[\mathbf{Y}(x, \mathbf{t}) - \mathbf{Y}(x, \mathbf{t}_0) | X = x]
\end{equation}

% For category-specific effects, we define the marginal treatment effect of category $k$ at level $p$ when other categories are fixed at levels $\mathbf{q}_{-k} = (q^{(1)}, \ldots, q^{(k-1)}, q^{(k+1)}, \ldots, q^{(m)})$:
% \begin{equation}
% \text{CATE}_k(x; p, \mathbf{q}_{-k}) = \mathbb{E}[\mathbf{Y}(x, p, \mathbf{q}_{-k}) - \mathbf{Y}(x, 0, \mathbf{q}_{-k}) | X = x]
% \end{equation}

\subsection{Causal Assumptions}

To enable causal identification in observational data, we adopt the following standard assumptions from causal inference theory, extended to the multi-category setting:

\begin{assumption}[Unconfoundedness]
\label{ass:unconfoundedness}
Given observed covariates, treatment assignment is independent of potential outcomes:
\begin{equation}
\mathbf{Y}(x, \mathbf{t}) \perp\!\!\!\perp \mathbf{T} | X = x, \quad \forall \mathbf{t} \in \mathcal{T}, x \in \mathcal{X}
\end{equation}
This assumes no unmeasured confounding factors that simultaneously influence both treatment assignment and outcomes.
\end{assumption}

\begin{assumption}[Stable Unit Treatment Value Assumption (SUTVA)]
\label{ass:sutva}
The SUTVA comprises two components:
\begin{enumerate}[label=(\alph*)]
    \item \emph{No interference}: The potential outcome for unit $i$ depends only on unit $i$'s treatment, not on other units' treatments.
    \item \emph{No hidden variations}: For each treatment combination $\mathbf{t} \in \mathcal{T}$, there exists a single, well-defined potential outcome $\mathbf{Y}(x, \mathbf{t})$.
\end{enumerate}
\end{assumption}

\begin{assumption}[Overlap/Positivity]
\label{ass:overlap}
For all covariate values $x \in \mathcal{X}$ and treatment combinations $\mathbf{t} \in \mathcal{T}$:
\begin{equation}
P(\mathbf{T} = \mathbf{t} | X = x) > 0
\end{equation}
This ensures that every treatment combination has positive probability of occurrence across the covariate distribution.
\end{assumption}

\begin{figure*}[ht]
    \centering
    \includegraphics[width=0.9\linewidth]{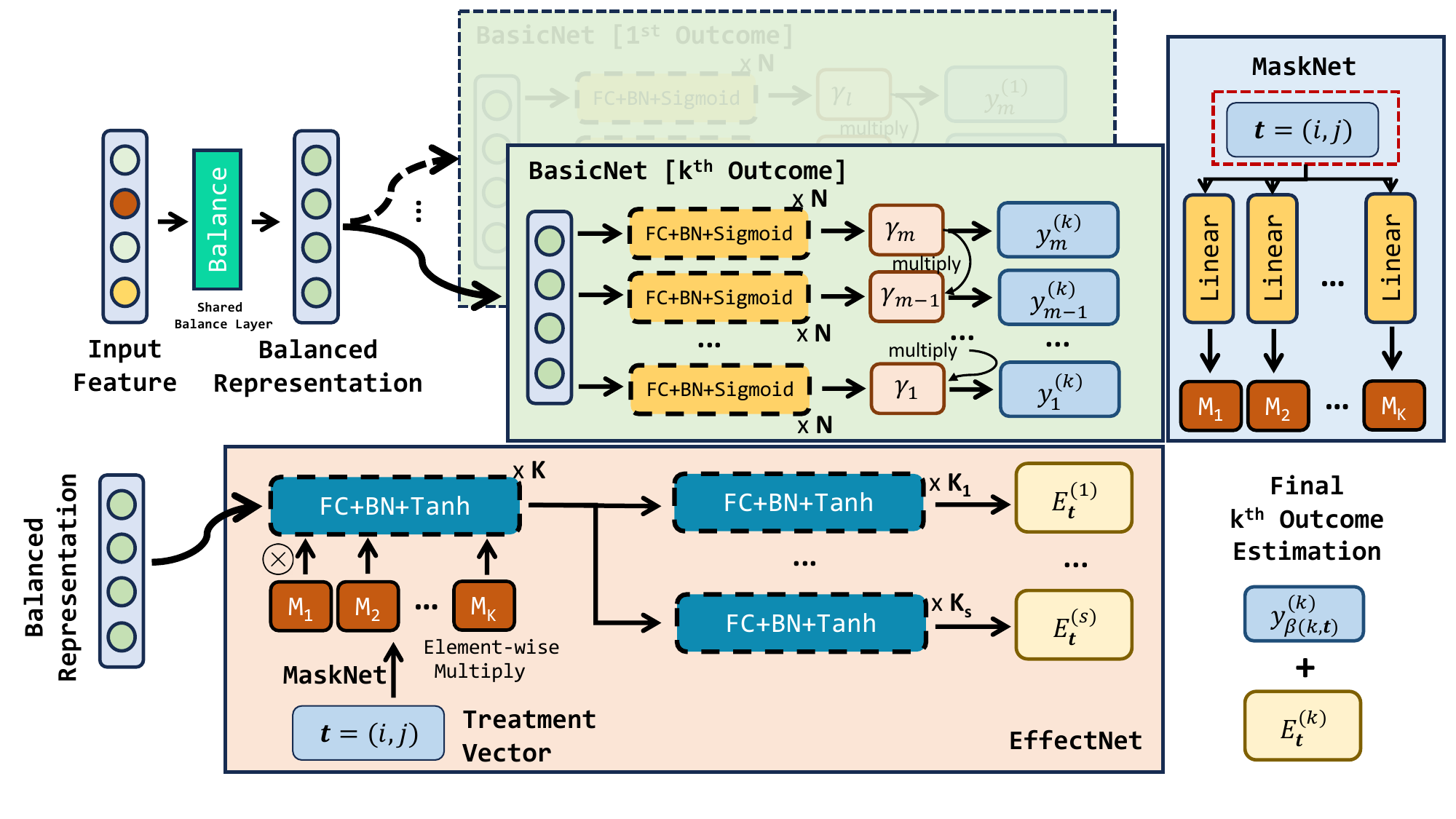}
    \caption{Architecture of XTNet for multi-treatment causal effect estimation. Our XTNet consists of three main components: BasicNet, EffectNet, and MaskNet. The BasicNet produces baseline outcome predictions using input features without cross-treatment effects. The EffectNet estimates cross-treatment effects from other treatments, which are then added to the baseline outcomes to obtain the final effect estimation. The MaskNet generates parameter masks for the EffectNet to construct treatment-specific masked networks. This masking mechanism provides flexibility for handling varying numbers of treatment categories.}
    \label{fig:xtnet-arch}
\end{figure*}

\section{Methodology}

In this section, we present the details of our proposed XTNet, including the network architecture and the design of training losses.

\subsection{Model Architecture Overview}

As shown in Figure~\ref{fig:xtnet-arch}, our proposed network architecture consists of three components: BasicNet, EffectNet, and MaskNet.

The final effect estimation for an input feature vector $x$ with a given treatment tuple $\textbf{t} = (t^{(1)},t^{(2)},...,t^{(m)})$ can be written as:
\begin{align}
    \hat{y}_{\textbf{t}}^{(i)} &= \text{BasicNet}^{(i)}(x,\textbf{t}) + \text{EffectNet}^{(i)}(x;M_{\textbf{t}}) \nonumber \\
    M_{\textbf{t}} &= \text{MaskNet}(t^{(1)},t^{(2)}, ..., t^{(m)}) \nonumber
\end{align}

where $\hat{y}_{\textbf{t}}^{(i)}$ denotes the $i$-th outcome, and $\text{BasicNet}^{(i)}$ denotes the $i$-th BasicNet (i.e., we train a separate BasicNet for each outcome category).

\subsection{BasicNet}

The BasicNet is designed to estimate dominant treatment effects without interference from the other treatments. This network can be trained using observational data filtered by treatment constraints (i.e., we only use samples that are primarily affected by a single treatment category). The function $\beta(k,\textbf{t})$ returns the index of the dominant treatment. It can be written as:
\begin{equation}
    \beta(k,\textbf{t}) = t_{\arg\max_{i} \mathbb{E}_{(x,\textbf{t},y)\sim\mathcal{D}}[y(x,\textbf{t}) - y(x,\textbf{t}^{-i})]}
\end{equation}
where $\textbf{t}^{-i}$ denotes $\textbf{t}$ after setting $t^{(i)}=0$.

Assume we have an $N$-layer Multi-Layer Perceptron as:
\begin{align}
    h_0(x) = \sigma(W_0x+b_0);\quad h_i(x) = \sigma(W_ih_{i-1}(x) + b_i) 
\end{align}

where $\sigma(\cdot)$ is the activation function. We then use a multi-head design to produce $m$ outputs, where each head is also a multi-layer perceptron. Let us denote the $i$-th head as $g_i(\cdot)$. For the $i$-th treatment:
\begin{align}
    \gamma_i(x) &= g_i(h_N(x)) \\
    \text{BasicNet}(x) &=
    \begin{cases}
      \gamma_i  & i = m\\
      \gamma_i \cdot \gamma_{i + 1}  & \text{otherwise}
    \end{cases} \\
    \text{BasicNet}(x,\textbf{t}) &= \text{BasicNet}(x)[\beta(\textbf{t})]
\end{align}

where the function $\beta(\textbf{t})$ represents the selection of the most effective treatment for the corresponding outcome category. Additionally, this chain design preserves monotonicity when required.

\subsection{EffectNet}

We design the EffectNet to estimate cross-treatment effects and it outputs $s$ cross-effects.

The EffectNet consists of two parts: (i) a backbone and (ii) effect heads. The backbone module adapts to different treatment combinations through weight masking. Here, we use MaskNet to generate multiple masks. Since cross-effects can be positive or negative, we utilize the tanh activation function at the output of each effect head. Assume we have a multi-layer perceptron $g^{\text{effect}}(\cdot)$ as the backbone module. We can formalize this function as:
\begin{align}
    h_0^{\text{effect}}(x) &= W_{0}^{\text{effect}}x + b_0^{\text{effect}}\\
    h_{i+1}^{\text{effect}}(x) &= W_{i+1}^{\text{effect}}h_{i}^{\text{effect}} + b_{i+1}^{\text{effect}}
\end{align}
where $h_i^{\text{effect}}$ is the $i$-th hidden feature vector, and $W_i^{\text{effect}}$ and $b_i^{\text{effect}}$ are the corresponding parameters.

Let $M_i^{W}$ and $M_i^{b}$ denote the weight and bias masks produced by MaskNet, respectively. We then perform element-wise multiplication on the backbone of EffectNet. Thus, we can formalize the output of the masked perceptron as follows:
\begin{align}
    \tilde{h}_0^{\text{effect}}(x) &= \tilde{W}_{0}^{\text{effect}}x + \tilde{b}_0^{\text{effect}}\\
    \tilde{W}_i^{\text{effect}} &= M_i^{W} \odot W_i^{\text{effect}} \\
    \tilde{b}_i^{\text{effect}} &= M_i^{b} \odot b_i^{\text{effect}}  \\
    \tilde{h}_{i+1}^{\text{effect}}(x) &= \tilde{W}_{i+1}^{\text{effect}}\tilde{h}_{i}^{\text{effect}} + \tilde{b}_{i+1}^{\text{effect}}
\end{align}
where $\odot$ denotes element-wise multiplication.

Thus, we can formalize the output of EffectNet with a given feature input $x$ as: 
\begin{align}
    \text{EffectNet}(x) &= g^{\text{effect}}(\tilde{h}_{K}^{\text{effect}}(x)) \\
    g_0^{\text{effect}}(h) &= \sigma( W^g_0h + b^g_0)\\
    g^{\text{effect}}(h) &= \sigma( W_K^gg_{K-1}^{\text{effect}}(h) + b_{K-1}^{g}) 
\end{align}
where $K$ denotes the number of layers in the head module, $\sigma(\cdot)$ denotes the activation function, $g^{\text{effect}}(\cdot)$ is the head module, and $W^g_i$ and $b^g_i$ denote the weight and bias of the $i$-th layer, respectively.

\subsection{MaskNet}

Our MaskNet is designed to modulate the behavior of EffectNet by conditioning on different treatment combinations. It comprises multiple independent linear layers, each generating a mask tailored to a corresponding layer in the EffectNet.

To formalize the computation of MaskNet, suppose there are $m$ treatment categories and the EffectNet consists of a $K$-layer backbone MLP. Given an input treatment combination vector $\mathbf{t} = (t^{(1)}, t^{(2)}, \dots, t^{(m)})$, MaskNet generates a set of $K$ masks via separate linear transformations:
\begin{align}
     M_i &= W^{\text{mask}}_i \mathbf{t}^\top + b^{\text{mask}}_i, \quad i=1,2,...,K\\
    \text{MaskNet}(\mathbf{t}) &= (M_1, M_2, \dots, M_K)
\end{align}

Each mask $M_i$ corresponds to the $i$-th layer of the EffectNet and contains parameter-specific masks for that layer. If the layer is a linear layer, the mask is expressed as a tuple $M_i = (M^W_i, M^b_i)$, where $M^W_i$ and $M^b_i$ denote the weight and bias masks, respectively.

\subsection{Loss Design}

Our training loss consists of two components: (i) the factual loss, which measures the error in outcome prediction for observed treatments, and (ii) the imbalance loss, which addresses selection bias in observational data. The imbalance loss is commonly used in state-of-the-art methods to mitigate this bias. 
The overall loss can be formulated as:
\begin{equation}
    \mathcal{L} = \lambda_1 \cdot \mathcal{L}_{\text{factual}} + \lambda_2 \cdot \mathcal{L}_{\text{imb}}
\end{equation}

where $\lambda_1$ and $\lambda_2$ are coefficients controlling the weights.

For the factual loss, we apply binary cross-entropy loss to measure the effect estimation error. It can be formalized as follows:
\begin{equation}
    \mathcal{L}_{\text{factual}} = -\frac{1}{ns}\sum_{i=1}^n\sum_{j=1}^{s} \left[y^{(j)}_{i,\textbf{t}_i}\log(\hat{y}^{(j)}_{i,\textbf{t}_i}) + (1-y^{(j)}_{i,\textbf{t}_i})\log(1-\hat{y}^{(j)}_{i,\textbf{t}_i})\right]
\end{equation}

For the imbalance loss, we use the Sinkhorn distance to align the feature distributions across different treatment groups. This loss can be expressed as:
\begin{equation}
    \mathcal{L}_{\text{imb}} = \sum_{\textbf{t}_1\neq \textbf{t}_2} \text{disc}(\{h_{i,\textbf{t}}\}_{\textbf{t}=\textbf{t}_1}, \{h_{i,\textbf{t}}\}_{\textbf{t}=\textbf{t}_2})
\end{equation}

where $\{h_{i,\textbf{t}}\}$ denotes the hidden features for a given treatment $\textbf{t}$ across all samples, and $\text{disc}(\cdot, \cdot)$ is the discrepancy computation function. The total loss can then be rewritten as:
\begin{align}
\mathcal{L} &= -\lambda_1 \cdot \frac{1}{ns}\sum_{i=1}^n\sum_{j=1}^{s} \left[y^{(j)}_{i,\textbf{t}_i}\log(\hat{y}^{(j)}_{i,\textbf{t}_i}) + (1-y^{(j)}_{i,\textbf{t}_i})\log(1-\hat{y}^{(j)}_{i,\textbf{t}_i})\right] \nonumber \\
    &\quad + \lambda_2 \cdot \sum_{\textbf{t}_1\neq \textbf{t}_2} \text{disc}(\{h_{i,\textbf{t}}\}_{\textbf{t}=\textbf{t}_1}, \{h_{i,\textbf{t}}\}_{\textbf{t}=\textbf{t}_2})
\end{align}

\begin{algorithm}[t]
    \caption{XTNet Training Algorithm}\label{alg:training}
    \KwData{Training Data $\mathcal{D}_{\text{train}} = \{(x_i,y_i,t_i)\}$, Learning Rate $\eta$, XTNet Parameters $\theta = (\theta_{\text{BasicNet}},\theta_{\text{EffectNet}}, \theta_{\text{MaskNet}})$, Loss Coefficients $\lambda_1,\lambda_2$, Batch Size $B$, Max Epochs $E_{\max}$}
    \KwResult{Trained XTNet Parameters $\theta^*$}
    $e := 0$\;
    \While{$e < E_{\max}$}{
        $(X_{\text{batch}},Y_{\text{batch}}, T_{\text{batch}}) \leftarrow \text{fetch\_batch}(\mathcal{D}_{\text{train}}, B)$\;
        \tcp{Filter samples with isolated treatments}
        $(X'_{\text{batch}},Y'_{\text{batch}}, T'_{\text{batch}}) \leftarrow \text{filter}(X_{\text{batch}},Y_{\text{batch}}, T_{\text{batch}})$\;
        $\hat{Y}'_{\text{batch}} \leftarrow \text{BasicNet}(X'_{\text{batch}},T'_{\text{batch}};\theta_{\text{BasicNet}})$\;
        $\mathcal{L}_{\text{BasicNet}} \leftarrow \mathcal{L}_{\text{factual}}(\hat{Y}'_{\text{batch}}, Y'_{\text{batch}})$\;
        \tcp{Update BasicNet}
        $\theta_{\text{BasicNet}} \leftarrow \theta_{\text{BasicNet}} - \eta \cdot \nabla_{\theta_{\text{BasicNet}}} \mathcal{L}_{\text{BasicNet}}$\;
        $M_{\text{batch}} \leftarrow \text{MaskNet}(T_{\text{batch}};\theta_{\text{MaskNet}})$\;
        $\hat{Y}_{\text{batch}} \leftarrow \text{BasicNet}(X_{\text{batch}},T_{\text{batch}};\theta_{\text{BasicNet}}) + \text{EffectNet}(X_{\text{batch}};\theta_{\text{EffectNet}},M_{\text{batch}})$\;
        $\mathcal{L}_{\text{train}} \leftarrow \lambda_1 \cdot \mathcal{L}_{\text{factual}}(\hat{Y}_{\text{batch}}, Y_{\text{batch}}) + \lambda_2 \cdot \mathcal{L}_{\text{imb}}(H_{\text{batch}}, T_{\text{batch}})$\;
        \tcp{Update EffectNet and MaskNet}
        $\theta_{\text{EffectNet}} \leftarrow \theta_{\text{EffectNet}} - \eta \cdot \nabla_{\theta_{\text{EffectNet}}} \mathcal{L}_{\text{train}}$\;
        $\theta_{\text{MaskNet}} \leftarrow \theta_{\text{MaskNet}} - \eta \cdot \nabla_{\theta_{\text{MaskNet}}} \mathcal{L}_{\text{train}}$\;
        $e \leftarrow e + 1$\;
    }
\end{algorithm}

\subsection{Training Algorithm}

As shown in Algorithm~\ref{alg:training}, we present the training pipeline of XTNet. The algorithm iteratively optimizes the model parameters using mini-batch stochastic gradient descent. During each epoch, a batch of training data is sampled and preprocessed to select samples with isolated treatments for training the BasicNet. The BasicNet component first estimates outcomes for the filtered batch and is updated by minimizing the factual loss. Next, MaskNet generates treatment-specific masks, which, along with EffectNet, refine the outcome predictions. The total training loss is computed as a weighted sum of the factual loss and the imbalance loss, controlled by coefficients $\lambda_1$ and $\lambda_2$. The parameters of EffectNet and MaskNet are then updated to minimize this total loss. This process is repeated for a predefined number of epochs.

\section{Metric Design}

In this paper, we propose a new metric named MCMV-AUCC which is more suitable for multi-category, multi-valued treatment scenarios. We also present theoretical proofs demonstrating the advantages of this proposed metric using rigorous probabilistic analysis.

First, we establish our probabilistic framework. Let $(\Omega, \mathcal{F}, \mathbb{P})$ be a probability space where samples $X \sim P_X$ are drawn from a distribution over the sample space $\mathcal{X}$. Each sample can receive multi-valued treatments $T \in \mathcal{T}$, producing stochastic costs $C(X,T)$ and outcomes $Y(X,T)$ with conditional distributions given $(X,T)$.

\begin{assumption}[Stochastic Monotonicity]
\label{ass:stoch_monotonicity}
For any treatments $t_i$ and $t_j$ with indices $i \leq j$, we have: 
\begin{align*}
\mathbb{E}[C(X,t_i)] &\leq \mathbb{E}[C(X,t_j)] \\
\mathbb{E}[Y(X,t_i)] &\leq \mathbb{E}[Y(X,t_j)]
\end{align*}
for all $X \sim P_X$, where expectations are taken over the conditional distributions of costs and outcomes.
\end{assumption}

\begin{assumption}[Stochastic Diminishing Returns]
\label{ass:stoch_convexity}
For any treatment index $i \geq 1$ and sample $X$, the marginal expected return on investment is decreasing:
\begin{align*}
\frac{\mathbb{E}[Y(X,t_i)]-\mathbb{E}[Y(X,t_{i-1})]}{\mathbb{E}[C(X,t_i)]-\mathbb{E}[C(X,t_{i-1})]}  > \frac{\mathbb{E}[Y(X,t_{i+1})]-\mathbb{E}[Y(X,t_{i})]}{\mathbb{E}[C(X,t_{i+1})]-\mathbb{E}[C(X,t_{i})]}
\end{align*}
where treatments are ordered by expected cost: 
$$\mathbb{E}[C(X,t_0)] = 0 < \mathbb{E}[C(X,t_1)] \leq \mathbb{E}[C(X,t_2)] \leq \cdots$$
and $t_0$ represents the no-treatment baseline with $\mathbb{E}[Y(X,t_0)] = 0$.
\end{assumption}

\begin{assumption}[Bounded Moments and Regularity]
\label{ass:bounded_moments}
For all treatments $t \in \mathcal{T}$ and samples $X \sim P_X$:
\begin{align*}
\sup_{t \in \mathcal{T}} \mathbb{E}[C(X,t)^2] &< \infty, \quad \inf_{t \in \mathcal{T} \setminus \{t_0\}} \mathbb{E}[C(X,t)] > 0 \\
\sup_{t \in \mathcal{T}} \mathbb{E}[Y(X,t)^2] &< \infty, \quad \inf_{t \in \mathcal{T} \setminus \{t_0\}} \mathbb{E}[Y(X,t)] > 0
\end{align*}
This ensures well-defined variances, concentration inequalities, and avoids degeneracies.
\end{assumption}

\begin{definition}[Stochastic Ideal Allocation]
\label{def:stoch_ideal}
Given budget $B$ and sample distribution $P_X$, the stochastic ideal allocation $\Pi^*(P_X, B)$ is a probability measure over allocation policies that maximizes expected outcome:
\begin{align*}
\Pi^*(P_X, B) = \arg\max_{\pi \in \mathcal{P}_B} &\mathbb{E}_{X \sim P_X} \mathbb{E}_{\pi} \left[ \sum_{(x,t) \in \text{supp}(\pi)} Y(x,t) \right]
\end{align*}
subject to the budget constraint:
\[
\mathbb{E}_{X \sim P_X} \mathbb{E}_{\pi} \left[ \sum_{(x,t) \in \text{supp}(\pi)} C(x,t) \right] \leq B
\]
where $\mathcal{P}_B$ is the set of feasible allocation policies under budget constraint $B$.
\end{definition}

\begin{definition}[Expected Metric Error]
\label{def:expected_metric_error}
For a metric $M$ with allocation policy $\Pi_M$, the expected metric error is:
\begin{align*}
\mathcal{E}(M) = \mathbb{E}_{P_X, F} \bigg[ &\int_0^{B_{\max}} \Big| \mathbb{E}_{\Pi^*}[\text{Outcome}(B)] - \mathbb{E}_{\Pi_M}[\text{Outcome}(B)] \Big| dB \bigg]
\end{align*}
where $F$ represents the distribution of predictive models and expectations are taken over both sample and model uncertainty. 
\end{definition}

\begin{theorem}[Stochastic Dominance Principle]
\label{thm:stoch_dominance}
If allocation policy $\Pi_1$ stochastically dominates $\Pi_2$ in the sense that 
\[
\mathbb{E}_{\Pi_1}[\text{Outcome}(B)] \geq \mathbb{E}_{\Pi_2}[\text{Outcome}(B)]
\]
for all budgets $B$ and all sample realizations, then $\mathcal{E}(M_1) \leq \mathcal{E}(M_2)$.
\end{theorem}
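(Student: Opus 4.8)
The plan is to reduce the statement to a pointwise comparison in the budget variable $B$ and then invoke monotonicity of the integral and of expectation; no appeal to Assumptions~\ref{ass:stoch_monotonicity}--\ref{ass:stoch_convexity} is needed, since the claim concerns only the relative ordering of two sub-optimal policies against the common optimum.

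First I would argue that the absolute value in Definition~\ref{def:expected_metric_error} is redundant. For each fixed budget $B$, Definition~\ref{def:stoch_ideal} makes $\Pi^*(P_X,B)$ the maximizer of expected cumulative outcome over the feasible set $\mathcal{P}_B$. Any uplift-curve metric allocates treatments along a ranked list until the budget is exhausted, so the induced policy $\Pi_M$ at budget $B$ lies in $\mathcal{P}_B$ by construction; hence $\mathbb{E}_{\Pi^*}[\text{Outcome}(B)] \ge \mathbb{E}_{\Pi_M}[\text{Outcome}(B)]$ for every $B \in [0,B_{\max}]$ and ($P_X \otimes F$-a.e.) sample realization. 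Therefore the integrand of $\mathcal{E}(M)$ is the nonnegative signed gap $\mathbb{E}_{\Pi^*}[\text{Outcome}(B)] - \mathbb{E}_{\Pi_M}[\text{Outcome}(B)]$, and $|\cdot|$ may be dropped.

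Next I would apply this to both metrics and subtract. Fix $B$ and a realization of $(X,F)$. The hypothesis $\mathbb{E}_{\Pi_1}[\text{Outcome}(B)] \ge \mathbb{E}_{\Pi_2}[\text{Outcome}(B)]$ gives
\[
0 \;\le\; \mathbb{E}_{\Pi^*}[\text{Outcome}(B)] - \mathbb{E}_{\Pi_1}[\text{Outcome}(B)] \;\le\; \mathbb{E}_{\Pi^*}[\text{Outcome}(B)] - \mathbb{E}_{\Pi_2}[\text{Outcome}(B)].
\]
Integrating this inequality over $B\in[0,B_{\max}]$ preserves it, and applying $\mathbb{E}_{P_X,F}[\cdot]$ preserves it once more; after the reduction of the previous step the two sides are exactly $\mathcal{E}(M_1)$ and $\mathcal{E}(M_2)$, so $\mathcal{E}(M_1)\le\mathcal{E}(M_2)$.

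The only genuine technical work is integrability and measurability: one must check that $B \mapsto \mathbb{E}_{\Pi}[\text{Outcome}(B)]$ is (jointly) measurable and that the inner $dB$-integral is almost surely finite, so that Tonelli's theorem legitimizes the order of integration and every quantity above is well defined. This is where Assumption~\ref{ass:bounded_moments} enters: the uniform bound $\sup_{t}\mathbb{E}[Y(X,t)^2]<\infty$ forces the cumulative outcome of any feasible allocation to be bounded in $L^1$ uniformly in $B$, and $B_{\max}<\infty$ makes the resulting integral finite. I expect this measure-theoretic bookkeeping, rather than the inequality chain, to be the main obstacle; the conceptual content is simply that ``being uniformly closer to the optimum at every budget'' survives the two monotone operations---integration over $B$ and expectation over sample and model randomness---out of which $\mathcal{E}$ is built.
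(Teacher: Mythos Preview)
Your proposal is correct and follows essentially the same approach as the paper: use optimality of $\Pi^*$ to drop the absolute value, then subtract the two metric errors so that the $\Pi^*$ terms cancel and the integrand reduces to $O_2(B)-O_1(B)\le 0$. The paper's proof is in fact terser than yours---it simply writes $\mathcal{E}(M_1)-\mathcal{E}(M_2)=\mathbb{E}_{P_X,F}\bigl[\int_0^{B_{\max}}(O_2(B)-O_1(B))\,dB\bigr]\le 0$ without spelling out the absolute-value reduction or the measurability bookkeeping you (rightly) flag.
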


\begin{proof}
Let $O^*(B)$, $O_1(B)$, and $O_2(B)$ denote the expected outcomes under policies $\Pi^*$, $\Pi_1$, and $\Pi_2$ respectively. By hypothesis, $O_1(B) \geq O_2(B)$ for all $B$, and by optimality, $O^*(B) \geq O_1(B) \geq O_2(B)$. Therefore,
\begin{align*}
% &\mathcal{E}(M_1) - \mathcal{E}(M_2) \\&= \mathbb{E}_{P_X, F} \left[ \int_0^{B_{\max}} |O^*(B) - O_1(B)| - |O^*(B) - O_2(B)| \, dB \right] \\
% &= \mathbb{E}_{P_X, F} \left[ \int_0^{B_{\max}} O_2(B) - O_1(B) \, dB \right] \leq 0
\mathcal{E}(M_1) - \mathcal{E}(M_2) = \mathbb{E}_{P_X, F} \left[ \int_0^{B_{\max}} O_2(B) - O_1(B) \, dB \right] \leq 0
\end{align*}
\end{proof}

\begin{definition}[AUCC Allocation Policy]
\label{def:aucc_policy}
The AUCC policy $\Pi_{\text{AUCC}}$ ranks treatments by expected RoI:
\[
\rho(x,t) = \frac{\mathbb{E}[Y(x,t)]}{\mathbb{E}[C(x,t)]}
\]
and allocates budget to treatments in decreasing order of $\rho(x,t)$.
\end{definition}

\begin{lemma}[RoI Ordering]
\label{lem:roi_ordering}
Under Assumption \ref{ass:stoch_convexity}, for any sample $x$ and treatments $t_i, t_j$ with $i < j$:
\[
\rho(x,t_i) > \rho(x,t_j)
\]
\end{lemma}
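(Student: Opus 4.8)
The plan is to derive the RoI ordering directly from the stochastic diminishing returns assumption by a telescoping argument. First I would set up notation: write $\Delta Y_k = \mathbb{E}[Y(x,t_k)] - \mathbb{E}[Y(x,t_{k-1})]$ and $\Delta C_k = \mathbb{E}[C(x,t_k)] - \mathbb{E}[C(x,t_{k-1})]$ for $k \geq 1$, so that Assumption~\ref{ass:stoch_convexity} states $\Delta Y_k / \Delta C_k$ is strictly decreasing in $k$, and all $\Delta C_k > 0$ by the cost ordering in that assumption together with Assumption~\ref{ass:bounded_moments}. Since $\mathbb{E}[Y(x,t_0)] = \mathbb{E}[C(x,t_0)] = 0$, we have the representations $\mathbb{E}[Y(x,t_i)] = \sum_{k=1}^i \Delta Y_k$ and $\mathbb{E}[C(x,t_i)] = \sum_{k=1}^i \Delta C_k$, so $\rho(x,t_i)$ is a weighted average (with weights $\Delta C_k$) of the marginal ratios $r_k := \Delta Y_k/\Delta C_k$ over $k = 1,\dots,i$.

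The key step is then the elementary fact about weighted averages of a decreasing sequence: if $r_1 > r_2 > \cdots$ and $w_k > 0$, then the running weighted mean $A_i = (\sum_{k=1}^i w_k r_k)/(\sum_{k=1}^i w_k)$ is strictly decreasing in $i$. I would prove this by comparing $A_i$ and $A_{i+1}$: appending the term $r_{i+1}$, which is strictly smaller than every $r_k$ for $k \leq i$ and hence strictly smaller than $A_i$, pulls the average strictly down, i.e. $A_{i+1} < A_i$. Concretely, $A_{i+1} - A_i$ has the same sign as $w_{i+1}(r_{i+1} - A_i) < 0$ after clearing denominators. Applying this with $w_k = \Delta C_k$ and $r_k = r_k$ gives $\rho(x,t_i) = A_i > A_j = \rho(x,t_j)$ whenever $i < j$, which is exactly the claim.

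One subtlety I would address explicitly: the hypothesis of Assumption~\ref{ass:stoch_convexity} is stated in terms of consecutive triples $(t_{i-1}, t_i, t_{i+1})$, so it gives $r_i > r_{i+1}$ for each $i$, and transitivity then yields $r_k > r_\ell$ for all $k < \ell$, which is what the weighted-average argument needs. I should also note the denominators $\mathbb{E}[C(x,t_i)]$ are strictly positive for $i \geq 1$ by Assumption~\ref{ass:bounded_moments}, so $\rho(x,t_i)$ is well-defined, and the strictness in Assumption~\ref{ass:stoch_convexity} propagates to strictness in the conclusion.

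I do not anticipate a serious obstacle here — the statement is essentially the mediant/weighted-average inequality dressed up in causal-inference notation. The only place requiring mild care is bridging from the "marginal RoI" form of the diminishing-returns assumption to the "average RoI" form of $\rho$; this is handled by the telescoping representation above. If one wanted to avoid the weighted-average lemma entirely, an alternative is a direct two-index induction showing $\rho(x,t_i) > \rho(x,t_{i+1})$ by cross-multiplying and using $r_{i+1} < \rho(x,t_i)$, then chaining; but the weighted-average phrasing is cleaner and I would present that.
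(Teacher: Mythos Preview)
Your proposal is correct and follows essentially the same approach as the paper's proof: both introduce the increments $\Delta Y_k,\Delta C_k$, recognize $\rho(x,t_i)$ as a $\Delta C_k$-weighted average of the strictly decreasing marginal ratios $r_k$, and conclude monotonicity of the running average. Your version is in fact more complete than the paper's, which only explicitly derives $\frac{\Delta Y_j}{\Delta C_j} < \rho(x,t_j) < \rho(x,t_1)$ and then asserts the general $i<j$ case; your appended-term argument ($A_{i+1}-A_i$ has the sign of $r_{i+1}-A_i<0$) makes that step rigorous.
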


\begin{proof}
Let $\Delta Y_k = \mathbb{E}[Y(x,t_k)] - \mathbb{E}[Y(x,t_{k-1})]$ and $\Delta C_k = \mathbb{E}[C(x,t_k)] - \mathbb{E}[C(x,t_{k-1})]$. By Assumption \ref{ass:stoch_convexity}, $\frac{\Delta Y_k}{\Delta C_k} > \frac{\Delta Y_{k+1}}{\Delta C_{k+1}}$ for all $k \geq 1$. Since $\rho(x,t_j) = \frac{\sum_{k=1}^j \Delta Y_k}{\sum_{k=1}^j \Delta C_k}$ is a weighted average of decreasing marginal returns, we have $\frac{\Delta Y_j}{\Delta C_j} < \rho(x,t_j) < \frac{\Delta Y_1}{\Delta C_1} = \rho(x,t_1)$. Therefore, $\rho(x,t_i) > \rho(x,t_j)$ for $i < j$.
\end{proof}

\begin{theorem}[Qini vs AUCC]
\label{thm:qini_aucc_prob}
Under Assumptions \ref{ass:stoch_monotonicity}, \ref{ass:stoch_convexity}, and \ref{ass:bounded_moments}, the AUCC policy achieves lower expected metric error than the Qini policy: 
\[
\mathcal{E}(M_{\text{AUCC}}) \leq \mathcal{E}(M_{\text{Qini}})
\]
\end{theorem}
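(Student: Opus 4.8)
The plan is to derive the statement from the Stochastic Dominance Principle (Theorem~\ref{thm:stoch_dominance}): it suffices to prove that the AUCC allocation policy stochastically dominates the Qini policy in the sense of that theorem, i.e. $\mathbb{E}_{\Pi_{\text{AUCC}}}[\text{Outcome}(B)] \geq \mathbb{E}_{\Pi_{\text{Qini}}}[\text{Outcome}(B)]$ for every budget $B \in [0, B_{\max}]$ and every sample realization, and then invoke Theorem~\ref{thm:stoch_dominance} with $\Pi_1 = \Pi_{\text{AUCC}}$ and $\Pi_2 = \Pi_{\text{Qini}}$ to conclude $\mathcal{E}(M_{\text{AUCC}}) \leq \mathcal{E}(M_{\text{Qini}})$. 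Here the Qini policy ranks $(x,t)$ pairs by the raw expected incremental outcome $\mathbb{E}[Y(x,t)]$ (the cost-unaware ranking; recall $\mathbb{E}[Y(x,t_0)]=0$), while the AUCC policy (Definition~\ref{def:aucc_policy}) ranks by the expected RoI $\rho(x,t)=\mathbb{E}[Y(x,t)]/\mathbb{E}[C(x,t)]$, and both fill the budget greedily in decreasing rank order.

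The first step is to recast budget allocation as a fractional knapsack over ``increment'' items: for each sample $x$ and treatment level $i \geq 1$, item $e_{x,i}$ has cost $\Delta C_i(x)=\mathbb{E}[C(x,t_i)]-\mathbb{E}[C(x,t_{i-1})]$ and value $\Delta Y_i(x)=\mathbb{E}[Y(x,t_i)]-\mathbb{E}[Y(x,t_{i-1})]$, subject to the per-sample precedence constraint that $e_{x,i}$ may be selected only if $e_{x,1},\dots,e_{x,i-1}$ are all selected. Assumptions~\ref{ass:stoch_monotonicity} and~\ref{ass:bounded_moments} give $\Delta C_i(x)>0$ and $\Delta Y_i(x)\geq 0$, so each item has a well-defined nonnegative density $\Delta Y_i(x)/\Delta C_i(x)$; and since the realized outcome of a policy spending budget $B$ is $\sum_{(x,t)\text{ selected}} Y(x,t)$, its expectation equals the total value of the increment items selected, while the feasible selections are exactly the prefix-closed subsets of total cost at most $B$.

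The second step is to identify the AUCC curve with the fractional-knapsack optimum. Assumption~\ref{ass:stoch_convexity} states that $\Delta Y_i(x)/\Delta C_i(x)$ is strictly decreasing in $i$, so within each sample the greedy density order already respects precedence; the precedence-constrained fractional knapsack therefore collapses to an ordinary fractional knapsack, for which greedy-by-density is optimal at every budget level. Lemma~\ref{lem:roi_ordering} further gives that the cumulative RoI $\rho(x,t_i)$ is strictly decreasing in $i$, and one checks that the AUCC rule --- repeatedly assigning, within remaining budget, the $(x,t)$ with largest $\rho(x,t)$ --- produces precisely this prefix-closed, density-sorted selection. Hence $\mathbb{E}_{\Pi_{\text{AUCC}}}[\text{Outcome}(B)]$ equals $\max\{\sum \mathbb{E}[Y] : \sum \mathbb{E}[C]\leq B\}$ over feasible selections, which dominates the value of the (feasible) Qini selection; an explicit exchange argument --- replacing a low-density increment chosen by Qini with a higher-density increment that fits in the cost it frees --- gives the same pointwise inequality and shows strictness whenever the rankings disagree on a budget-active item. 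Combining with Theorem~\ref{thm:stoch_dominance} finishes the proof.

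The main obstacle is the interface between the two notions of RoI: AUCC ranks by the \emph{cumulative} ratio $\rho(x,t_i)$, whereas knapsack optimality is naturally phrased in terms of the \emph{marginal} ratios $\Delta Y_i/\Delta C_i$, and these have to be reconciled with the per-sample precedence constraint so that the cumulative-ratio greedy rule provably yields a prefix-closed, density-sorted selection (cross-sample comparisons can interleave the two orders differently). The remaining points --- the fractional last increment at the budget boundary and ties, the reduction of the integral selection to the fractional optimum along the continuum of budgets, and passing the inequality through the outer expectation $\mathbb{E}_{P_X,F}$ using the finite second moments of Assumption~\ref{ass:bounded_moments} to justify Fubini and well-definedness of the curves --- are routine and do not affect the pointwise dominance that drives the result.
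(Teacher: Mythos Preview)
Your proposal has a genuine gap. You attempt to show that the AUCC policy coincides with the fractional-knapsack optimum (the prefix-closed, marginal-density-sorted selection), from which dominance over Qini would follow trivially. But this identification is false: the AUCC rule ranks by the \emph{cumulative} ratio $\rho(x,t_i)=\mathbb{E}[Y(x,t_i)]/\mathbb{E}[C(x,t_i)]$, and this ordering does \emph{not} in general agree with the marginal-density ordering across different samples. Concretely, take two samples $x,x'$ with $\mathbb{E}[C(x,t_1)]=\mathbb{E}[C(x',t_1)]=1$, $\mathbb{E}[C(x,t_2)]=2$, $\mathbb{E}[Y(x,t_1)]=10$, $\mathbb{E}[Y(x,t_2)]=16$, $\mathbb{E}[Y(x',t_1)]=7$; all three assumptions hold. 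With budget $B=2$, AUCC ranks $(x,t_1),(x,t_2),(x',t_1)$ by RoI $10,8,7$ and ends with $x\mapsto t_2$, outcome $16$; the density-sorted optimum assigns $t_1$ to both $x$ and $x'$, outcome $17$. So AUCC is strictly suboptimal, and ``one checks that the AUCC rule \ldots produces precisely this prefix-closed, density-sorted selection'' cannot be checked --- indeed the paper's own Theorem~\ref{thm:aucc_mv_aucc_exchange} confirms this, since MV-AUCC (which \emph{is} the marginal-density greedy) strictly improves on AUCC. You correctly flag the cumulative-vs-marginal interface as ``the main obstacle'', but it is not an obstacle that can be overcome: it is a counterexample to your central claim, and your alternative exchange argument only shows Qini $\leq$ optimal, not Qini $\leq$ AUCC.

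The paper's proof avoids this trap by never asserting that AUCC is optimal. Instead it runs a direct exchange argument tailored to the Qini--AUCC comparison: for each $(x_i,t_{j_i})$ with $j_i>1$ in the Qini allocation, it drops to $(x_i,t_1)$ and reinvests the freed budget $\Delta C_i$ at RoI at least $\rho(x_i,t_1)$; the inequality $\rho(x_i,t_1)>\rho(x_i,t_{j_i})$ from Lemma~\ref{lem:roi_ordering} then guarantees that the reinvestment gain $\rho(x_i,t_1)\cdot\Delta C_i$ exceeds the drop's loss $\mathbb{E}[Y(x_i,t_{j_i})]-\mathbb{E}[Y(x_i,t_1)]$. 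Iterating yields $\mathbb{E}_{\Pi_{\text{AUCC}}}[O(B)]\geq\mathbb{E}_{\Pi_{\text{Qini}}}[O(B)]$ for all $B$, and Theorem~\ref{thm:stoch_dominance} finishes. Your knapsack decomposition and precedence analysis are sound scaffolding, but the argument must compare Qini to AUCC directly rather than route through a false optimality claim.
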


\begin{proof}
See Appendix~\ref{app:qini-aucc}.
\end{proof}

\begin{definition}[MV-AUCC Allocation Policy]
\label{def:mv_aucc_policy}
The MV-AUCC policy $\Pi_{\text{MV-AUCC}}$ ranks treatments by expected marginal RoI:
\begin{align*}
\mu(x,t_i) = \frac{\mathbb{E}[Y(x,t_i)] - \mathbb{E}[Y(x,t_{i-1})]}{\mathbb{E}[C(x,t_i)] - \mathbb{E}[C(x,t_{i-1})]}, \quad i \geq 1
\end{align*}
\end{definition}

\begin{theorem}[AUCC vs MV-AUCC]
\label{thm:aucc_mv_aucc_exchange}
Under Assumptions \ref{ass:stoch_monotonicity}, \ref{ass:stoch_convexity}, and \ref{ass:bounded_moments}, MV-AUCC achieves lower or equal expected metric error: 
\[
\mathcal{E}(M_{\text{MV-AUCC}}) \leq \mathcal{E}(M_{\text{AUCC}})
\]
\end{theorem}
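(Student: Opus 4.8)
The plan is to reduce the claim to the Stochastic Dominance Principle (Theorem~\ref{thm:stoch_dominance}): it suffices to show that, for every budget $B$ and every sample realization,
\[
\mathbb{E}_{\Pi_{\text{MV-AUCC}}}[\text{Outcome}(B)] \;\ge\; \mathbb{E}_{\Pi_{\text{AUCC}}}[\text{Outcome}(B)],
\]
after which $\mathcal{E}(M_{\text{MV-AUCC}}) \le \mathcal{E}(M_{\text{AUCC}})$ follows by applying Theorem~\ref{thm:stoch_dominance} with $\Pi_1 = \Pi_{\text{MV-AUCC}}$ and $\Pi_2 = \Pi_{\text{AUCC}}$. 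First I would set up a common ground set. For a fixed sample $x$, let the $i$-th increment have cost $\Delta C_i(x) = \mathbb{E}[C(x,t_i)] - \mathbb{E}[C(x,t_{i-1})]$ and outcome $\Delta Y_i(x) = \mathbb{E}[Y(x,t_i)] - \mathbb{E}[Y(x,t_{i-1})]$; by Assumption~\ref{ass:stoch_monotonicity} these are nonnegative, by Assumption~\ref{ass:stoch_convexity} the marginal ratios $\mu(x,t_i) = \Delta Y_i(x)/\Delta C_i(x)$ are strictly decreasing in $i$, and by Lemma~\ref{lem:roi_ordering} the averages $\rho(x,t_i)$ are strictly decreasing in $i$ as well. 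Hence when either policy scans its ranked list in decreasing priority it encounters, for each sample, level $t_1$ before $t_2$ before $\cdots$, so ``allocating treatment $t_i$'' is equivalent to sequentially purchasing increments $1,\dots,i$, and both allocations are automatically downward-closed per sample. Thus each policy is a greedy procedure over the population set of increments: MV-AUCC prioritizes by the true marginal ratio $\mu$, while AUCC prioritizes by the average ratio $\rho$.

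Next I would identify MV-AUCC with the ideal allocation of Definition~\ref{def:stoch_ideal}. After the increment reparametrization, the population budget problem is a fractional knapsack with a separable objective that is concave along each sample's chain (Assumption~\ref{ass:stoch_convexity}); it is classical that greedy selection by decreasing bang-per-buck ratio is optimal for such a relaxation, and the per-sample downward-closure constraints are not binding precisely because $\mu(x,t_i)$ decreases within each chain. Therefore $\mathbb{E}_{\Pi_{\text{MV-AUCC}}}[\text{Outcome}(B)] = \mathbb{E}_{\Pi^*}[\text{Outcome}(B)]$ for all $B$. Since $\Pi^*$ is optimal, $\mathbb{E}_{\Pi^*}[\text{Outcome}(B)] \ge \mathbb{E}_{\Pi_{\text{AUCC}}}[\text{Outcome}(B)]$, which gives the pointwise domination required above. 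An equivalent, more self-contained route is a cost-for-cost exchange argument: any increment that AUCC includes but MV-AUCC does not has marginal ratio at most MV-AUCC's threshold ratio at budget $B$, whereas any increment MV-AUCC includes but AUCC does not has marginal ratio at least that threshold --- because $\rho(x,t_i) > \mu(x,t_i)$ under diminishing returns causes AUCC to overvalue later increments --- so swapping the former for the latter never decreases the accumulated outcome while preserving feasibility; this is exactly the optimality-of-greedy argument specialized to our chains.

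The main obstacle is the rigorous treatment of the continuum/stochastic fractional knapsack. One must verify that the cumulative cost, viewed as a function of the priority threshold, is continuous, so that a subfamily of increments with total cost exactly $B$ exists and the curve $\text{Outcome}(B)$ is well defined; that the $\arg\max$ in Definition~\ref{def:stoch_ideal} is attained; and that priority ties occur on a $P_X$-null set or can be broken consistently. This is where Assumption~\ref{ass:bounded_moments} (finite second moments and costs bounded away from zero) enters. A secondary point to handle with care is the equivalence between the ``assign treatment level $t_i$'' descriptions in Definitions~\ref{def:aucc_policy} and~\ref{def:mv_aucc_policy} and the increment-based greedy formulation, including the bookkeeping of incremental versus total cost when a sample is upgraded; the monotone within-chain ordering obtained from Lemma~\ref{lem:roi_ordering} is what makes the two views coincide, and once that is settled the two displays above close the argument.
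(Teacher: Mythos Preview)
Your proposal is correct and in fact more carefully argued than the paper's own proof. Both you and the paper ultimately reduce to the Stochastic Dominance Principle (Theorem~\ref{thm:stoch_dominance}) by establishing pointwise domination of $\Pi_{\text{AUCC}}$ by $\Pi_{\text{MV-AUCC}}$, but the routes differ.

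The paper argues directly by exchange: it takes an AUCC allocation $\{(x_j,t_{i_j})\}$, splits each $(x_j,t_{i_j})$ with $i_j>1$ into its first increment and the remaining increments $2,\dots,i_j$, notes that the latter block has average marginal RoI $\bar\mu_j < \mu(x_j,t_2)$, and asserts that MV-AUCC can always swap this block for increments of marginal RoI at least $\mu(x_j,t_2)$ elsewhere in the population. Iterating gives the outcome inequality. Your primary route is more structural: after the increment reparametrization, you identify $\Pi_{\text{MV-AUCC}}$ with the ideal allocation $\Pi^*$ of Definition~\ref{def:stoch_ideal} via the classical greedy optimality for fractional knapsack with concave chains, so domination over \emph{any} policy (hence AUCC) is immediate. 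Your alternative exchange argument, phrased in terms of threshold marginal RoI and the inequality $\rho(x,t_i)>\mu(x,t_i)$ for $i\ge 2$, is essentially the paper's argument recast in a cleaner symmetric form.

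What your approach buys is a stronger intermediate statement (MV-AUCC $=$ ideal) and an explicit accounting of the technical hazards the paper glosses over: continuity of the cost-versus-threshold curve, attainment of the argmax in Definition~\ref{def:stoch_ideal}, tie-breaking, and the bookkeeping needed to reconcile the ``assign level $t_i$'' semantics of Definitions~\ref{def:aucc_policy}--\ref{def:mv_aucc_policy} with the increment picture. What the paper's route buys is brevity: it never needs to claim MV-AUCC is globally optimal, only that it improves on AUCC, so it sidesteps the fractional-knapsack machinery entirely. Either route closes the theorem; yours is the more rigorous of the two.
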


\begin{proof}
Consider an AUCC allocation $S_A = \{(x_j,t_{i_j})\}$ ordered by decreasing total RoI. For any treatment $(x_j, t_{i_j})$ with $i_j > 1$, the marginal components $\{(x_j, t_2), \ldots, (x_j, t_{i_j})\}$ have marginal RoIs $\mu(x_j, t_2) > \cdots > \mu(x_j, t_{i_j})$ by Assumption \ref{ass:stoch_convexity}.

The average marginal RoI of the components beyond the first is:
\[
\bar{\mu}_j = \frac{\mathbb{E}[Y(x_j, t_{i_j})] - \mathbb{E}[Y(x_j, t_1)]}{\mathbb{E}[C(x_j, t_{i_j})] - \mathbb{E}[C(x_j, t_1)]} = \frac{\sum_{k=2}^{i_j} \Delta Y_k}{\sum_{k=2}^{i_j} \Delta C_k} < \mu(x_j, t_2)
\]

Since MV-AUCC optimally allocates budget to the globally highest marginal RoI treatments, there exists a feasible exchange that replaces the marginal components with RoI $\bar{\mu}_j$ with treatments having marginal RoI at least $\mu(x_j, t_2) > \bar{\mu}_j$. This exchange strictly improves the outcome while satisfying the budget constraint. Iterating this process over all treatments in $S_A$ yields $\mathbb{E}_{\Pi_{\text{MV-AUCC}}}[O(B)] \geq \mathbb{E}_{\Pi_{\text{AUCC}}}[O(B)]$ for all $B$. By Theorem \ref{thm:stoch_dominance}, $\mathcal{E}(M_{\text{MV-AUCC}}) \leq \mathcal{E}(M_{\text{AUCC}})$.
\end{proof}

\begin{definition}[MCMV-AUCC Allocation Policy]
\label{def:mcmv_aucc_policy}
For multi-category treatments $\mathbf{t} = (t^{(1)}, t^{(2)}, \ldots, t^{(m)})$, define the grouping function:
\[
Q(\mathbf{t}) = \sum_{j=1}^k t^{(j)}
\]

For each intensity level $g$, define the group-averaged outcome:
\[
\bar{Y}(x,q) = \mathbb{E}_{\mathbf{t}: Q(\mathbf{t})=q}[Y(x,\mathbf{t})]
\]

The MCMV-AUCC policy uses grouped marginal RoI:
\[
\gamma(x,q) = \frac{\bar{Y}(x,q) - \bar{Y}(x,q-1)}{\bar{C}(x,q) - \bar{C}(x,q-1)}
\]
where $\bar{C}(x,q) = \mathbb{E}_{\mathbf{t}: Q(\mathbf{t})=q}[C(x,\mathbf{t})]$.
\end{definition}

\begin{assumption}[Grouping Coherence]
\label{ass:grouping_coherence}
The grouping function preserves essential monotonicity:
\begin{align*}
q_1 < q_2 &\Rightarrow \bar{Y}(x,q_1) \leq \bar{Y}(x,q_2) \\
q_1 < q_2 &\Rightarrow \bar{C}(x,q_1) \leq \bar{C}(x,q_2)
\end{align*}
and the grouped marginal returns are decreasing: $\gamma(x,q) > \gamma(x,q+1)$ for all $g$.
\end{assumption}

\begin{theorem}[MV-AUCC vs MCMV-AUCC]
\label{thm:mv_aucc_mcmv_aucc_dimension}
Under Assumptions \ref{ass:stoch_monotonicity}, \ref{ass:stoch_convexity}, \ref{ass:bounded_moments}, and \ref{ass:grouping_coherence}, MCMV-AUCC achieves lower expected metric error: 
\[
\mathcal{E}(M_{\text{MCMV-AUCC}}) \leq \mathcal{E}(M_{\text{MV-AUCC}})
\]
with strict inequality when treatment interactions exist.
\end{theorem}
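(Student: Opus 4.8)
The plan is to follow exactly the template already used for Theorem~\ref{thm:aucc_mv_aucc_exchange}: everything is routed through the Stochastic Dominance Principle (Theorem~\ref{thm:stoch_dominance}), so it suffices to prove that the MCMV-AUCC policy of Definition~\ref{def:mcmv_aucc_policy} stochastically dominates $\Pi_{\text{MV-AUCC}}$, i.e.\ that $\mathbb{E}_{\Pi_{\text{MCMV-AUCC}}}[\text{Outcome}(B)] \geq \mathbb{E}_{\Pi_{\text{MV-AUCC}}}[\text{Outcome}(B)]$ for every budget $B \in [0,B_{\max}]$ and every sample realization. Once that dominance is in hand, $\mathcal{E}(M_{\text{MCMV-AUCC}}) \leq \mathcal{E}(M_{\text{MV-AUCC}})$ follows verbatim from Theorem~\ref{thm:stoch_dominance}, and the strictness claim reduces to exhibiting a budget sub-interval of positive length on which the dominance is strict, which will happen precisely when an interaction term is nonzero.

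The first genuine step is to pin down how MV-AUCC acts on the combinatorial treatment space, since Definition~\ref{def:mv_aucc_policy} only specifies the marginal RoI $\mu(x,t_i)$ along a linearly ordered, nested family $t_0,t_1,\dots$. Applied to multi-category combinations $\mathbf{t}$, MV-AUCC must therefore either treat each $\mathbf{t}$ as atomic and rank by a marginal computed along an artificial cost-ordered chain, or decompose additively across categories; I would fix this reading explicitly at the outset, because both variants share the defect that the score MV-AUCC assigns to $\mathbf{t}$ differs from the true interaction-inclusive outcome $\mathbb{E}[Y(x,\mathbf{t})]$ whenever the cross-treatment component $\mathbb{E}[Y(x,\mathbf{t})] - \sum_j \mathbb{E}[Y(x,t^{(j)})]$ is nonzero. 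Next I would prove the grouped analogue of Lemma~\ref{lem:roi_ordering}: under Assumption~\ref{ass:grouping_coherence}, $\gamma(x,q)$ is strictly decreasing in $q$ while $\bar Y(x,\cdot)$ and $\bar C(x,\cdot)$ are monotone, so that the greedy rule ``fill budget in decreasing order of $\gamma(x,q)$'' is optimal among allocations expressible in the grouped representation — a weighted-average-of-decreasing-marginals argument identical to the one used there for $\rho$.

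The core is then an exchange argument mirroring the proof of Theorem~\ref{thm:aucc_mv_aucc_exchange}. Starting from an arbitrary MV-AUCC allocation, I would replace each selected pair $(x_j,\mathbf{t}_j)$ whose MV-AUCC score mis-states its true grouped marginal contribution by the grouped unit(s) of equal expected cost that MCMV-AUCC would have chosen; because MCMV-AUCC always allocates to the globally largest available $\gamma$, each such swap is budget-feasible (it is cost-preserving, hence feasible at every $B$ simultaneously) and weakly increases the realized outcome, the increase being strict exactly when the interaction component at $(x_j,\mathbf{t}_j)$ is nonzero. Iterating the swaps over all selected pairs converts the MV-AUCC allocation into the MCMV-AUCC allocation without ever decreasing $\text{Outcome}(B)$, giving the required dominance; Theorem~\ref{thm:stoch_dominance} then finishes, and when all interaction terms vanish the two policies coincide, yielding equality.

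The step I expect to be the main obstacle is not the exchange bookkeeping but the first one: Definition~\ref{def:mv_aucc_policy} does not literally apply to non-nested combinatorial treatment spaces, so the proof must commit to a precise and defensible reading of ``MV-AUCC on multi-category treatments'' and then verify that the exchange works uniformly under that reading. A related subtlety, inherited from Theorem~\ref{thm:aucc_mv_aucc_exchange}, is that the swaps must be shown to hold for \emph{all} budget levels $B$ at once — not merely at a single $B$ — so that the hypothesis of Theorem~\ref{thm:stoch_dominance} is genuinely met; and one should check that Assumption~\ref{ass:grouping_coherence} is compatible with Assumptions~\ref{ass:stoch_monotonicity}--\ref{ass:bounded_moments} in the presence of interactions, so that the grouped marginals are well defined and the ``strict when interactions exist'' clause is non-vacuous.
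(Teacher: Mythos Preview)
Your overall architecture matches the paper's: both route everything through the Stochastic Dominance Principle (Theorem~\ref{thm:stoch_dominance}), so the task reduces to showing $\mathbb{E}_{\Pi_{\text{MCMV-AUCC}}}[O(B)] \geq \mathbb{E}_{\Pi_{\text{MV-AUCC}}}[O(B)]$ for all $B$, with strictness tied to nonzero interactions. Where you diverge is in how that dominance is argued. You propose an abstract exchange argument in the style of Theorem~\ref{thm:aucc_mv_aucc_exchange}, swapping MV-AUCC selections for cost-equivalent MCMV-AUCC group units. The paper instead makes the mechanism concrete: it posits an additive-plus-interaction outcome structure $\mathbb{E}[Y(x,\mathbf{t})] = \sum_j f_j(x,t^{(j)}) + \sum_{i<j} g_{ij}(x,t^{(i)},t^{(j)})$, exhibits a canonical pair --- a concentrated treatment $\mathbf{t}_A = (q,0,\dots,0)$ versus a distributed $\mathbf{t}_B$ at the same intensity $q$ --- and argues that MV-AUCC's marginal score systematically undervalues $\mathbf{t}_B$ because the marginal computation misses cross-category interaction terms. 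The dominance gap is then bounded from below by summing over the set $\mathcal{E}_q$ of such mis-ordered pairs, weighted by an error-frequency factor $\phi_q$ and an outcome-gap factor $\delta_{ij}$; the appendix makes these quantities explicit under additional structural assumptions.

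The payoff of the paper's route is that it directly resolves the obstacle you correctly flagged: rather than committing to a general reading of MV-AUCC on non-nested combinatorial spaces, it fixes a specific marginal-RoI formula $\mu^*(x,\mathbf{t})$ (decrementing along one category) and shows exactly which interaction terms that formula drops. Your exchange argument is cleaner in principle but leaves that definitional commitment implicit, and your swap step (``replace by grouped units of equal expected cost'') needs care because MCMV-AUCC allocates to intensity levels $q$, not to individual $\mathbf{t}$'s --- the paper's ordering-error framing sidesteps this by comparing outcomes within a fixed intensity group rather than constructing explicit swaps.
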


\begin{proof}

The superiority of MCMV-AUCC stems from MV-AUCC’s systematic ordering errors caused by erroneous marginal RoI estimations within incomparable treatment groups.

For outcome functions with cross-category interactions $\mathbb{E}[Y(x,\mathbf{t})] = \sum_{j=1}^k f_j(x, t^{(j)}) + \sum_{i<j} g_{ij}(x, t^{(i)}, t^{(j)})$, consider treatments $\mathbf{t}_A = (q, 0, \ldots, 0)$ (concentrated) and $\mathbf{t}_B$ (distributed across categories) with the same total intensity. While $\mathbf{t}_A$ may exhibit high marginal returns $\mu^*(x, \mathbf{t}_A)$, $\mathbf{t}_B$ benefits from interaction terms $\sum_{i<j} g_{ij}(x, t_B^{(i)}, t_B^{(j)})$ that MV-AUCC's marginal calculation ignores, leading to systematic undervaluation of interaction-rich treatments.

MCMV-AUCC eliminates ordering errors by creating a total ordering through the grouping function $Q(\mathbf{t}) = \sum_{j=1}^k t^{(j)}$. Within each intensity group, the unified metric $\gamma(x,q) = \frac{\bar{Y}(x,q) - \bar{Y}(x,q-1)}{\bar{C}(x,q) - \bar{C}(x,q-1)}$ uses group-averaged outcomes $\bar{Y}(x,q) = \mathbb{E}_{\mathbf{t}: Q(\mathbf{t})=q}[Y(x,\mathbf{t})]$ that automatically incorporate all interaction patterns, providing consistent global ranking information. Let $\mathcal{E}_q$ denote treatments where MV-AUCC's marginal ranking contradicts true outcome ordering within intensity group $q$. When interaction effects exist with magnitude bounded by $G$, the aggregate improvement satisfies:
\[
\mathbb{E}_{\Pi_{\text{MCMV}}}[O(B)] - \mathbb{E}_{\Pi_{\text{MV}}}[O(B)] \geq \sum_{q} \sum_{(\mathbf{t}_i, \mathbf{t}_j) \in \mathcal{E}_q} \phi_q \cdot \delta_{ij}
\]
where $\phi_q$ represents the frequency of ordering errors and $\delta_{ij}$ measures the outcome difference between correctly and incorrectly ordered treatments. By Theorem \ref{thm:stoch_dominance}, $\mathcal{E}(M_{\text{MCMV-AUCC}}) \leq \mathcal{E}(M_{\text{MV-AUCC}})$. The complete analysis with concrete bounds is provided in Appendix~\ref{app:mv-aucc-mcmv-aucc}.
\end{proof}

\section{Experimental Evaluation}

\begin{table}[h]
    \centering
    \begin{tabularx}{0.43\textwidth}{@{}clccc@{}}
    \toprule
    \textbf{Dataset}
    &\textbf{Method} 
    &\textbf{Ranking Error} 
    &\textbf{MCMV-AUCC}\\
    \midrule
    \multirow{7.5}{*}{Synthetic-1} & 
    BLR
    & ${0.2613 \pm 0.0226}$ & $0.3098$ &\\
    \cmidrule(lr){2-5}
    & 
    CFRNet
    & $0.2595 \pm 0.0309$ & $0.0521$  &\\
    \cmidrule(lr){2-5} &
    TARNet
    & $0.2514 \pm 0.0276$ & ${0.1348}$  &\\
    % \cmidrule(lr){2-5} &
    % VCNet
    % & $0.2616 \pm 0.1399$ & ${0.0181}$  &\\
    \cmidrule(lr){2-5} &
    DR-CFR
    & $0.2429 \pm 0.0280$ & ${0.2137}$  &\\
    \cmidrule(lr){2-5} &
    XTNet
    & $\textbf{0.2272} \pm 0.0433$ & $\mathbf{0.5375}$  &\\
    \midrule

    \multirow{7.5}{*}{Synthetic-2} & 
    BLR
    & ${0.2740 \pm 0.0300}$ & $0.0504$ &\\
    \cmidrule(lr){2-5}
    & 
    CFRNet
    & $0.2592 \pm 0.0310$ & $0.0346$  &\\
    \cmidrule(lr){2-5} &
    TARNet
    & $0.2632 \pm 0.0295$ & ${-0.0004}$  &\\
    % \cmidrule(lr){2-5} &
    % VCNet
    % & $0.2687 \pm 0.0794$ & ${0.0183}$  &\\
    \cmidrule(lr){2-5} &
    DR-CFR
    & $0.2588 \pm 0.0233$ & ${0.0102}$  &\\
    \cmidrule(lr){2-5} &
    XTNet
    & $\textbf{0.2465} \pm 0.0380$ & $\mathbf{0.0529}$  &\\
    \midrule
    \multirow{7.5}{*}{Synthetic-3} & 
    BLR
    & ${0.2868 \pm 0.0268}$ & $0.0718$ &\\
    \cmidrule(lr){2-5}
    & 
    CFRNet
    & $0.2767 \pm 0.0310$ & $0.0520$  &\\
    \cmidrule(lr){2-5} &
    TARNet
    & $0.2737 \pm 0.0332$ & ${0.0500}$  &\\
    % \cmidrule(lr){2-5} &
    % VCNet
    % & $0.2949 \pm 0.1785$ & ${-0.0711}$  &\\
    \cmidrule(lr){2-5} &
    DR-CFR
    & $0.2848 \pm 0.0244$ & ${0.0207}$  &\\
    \cmidrule(lr){2-5} &
    XTNet
    & $\textbf{0.2726} \pm 0.0335$ & $\mathbf{0.0834}$  &\\
    \bottomrule
    \end{tabularx}
    \caption{Performance comparison on synthetic datasets. XTNet consistently achieves the lowest ranking error and highest MCMV-AUCC across all datasets.}
    \vspace{-20pt}
    \label{tab:perf-syn}
\end{table}

\begin{table}[h]
\centering
    \begin{tabularx}{0.43\textwidth}{XXc}
    \toprule
    \textbf{Method} 
    &\textbf{Ranking Error} 
    &\textbf{MCMV-AUCC}\\
    \midrule
    BLR
    & ${0.1436 \pm 0.1943}$ & $0.3840$ \\
    \cmidrule(lr){1-3}
    CFRNet
    & $0.2357 \pm 0.2384$ & $-0.4148$  \\
    \cmidrule(lr){1-3} 
    TARNet
    & $0.1221 \pm 0.2235$ & $-0.0072$ \\
    \cmidrule(lr){1-3} 
    DR-CFR
    & $0.1117 \pm 0.2089$ & $1.0885$ \\
    \cmidrule(lr){1-3} 
    XTNet
    & $\textbf{0.1100} \pm 0.2073$ & $\mathbf{1.1701}$ \\
    \bottomrule
    \end{tabularx}
    \caption{Performance comparison on real-world ride-hailing dataset.}
    \vspace{-30pt}
    \label{tab:perf-real}
\end{table}

Our empirical evaluation addresses the following research questions:
\begin{itemize}[leftmargin=*]
    \item \textbf{RQ1:} How does XTNet perform compared to state-of-the-art methods on multi-category, multi-valued treatment effect estimation?
    \item \textbf{RQ2:} What is the contribution of each component in the XTNet architecture?
    \item \textbf{RQ3:} How does the proposed MCMV-AUCC metric compare to existing evaluation approaches?
\end{itemize}

\subsection{Experimental Setup}

\textbf{Datasets.} We evaluate our approach on both synthetic and real-world datasets to ensure comprehensive assessment across diverse scenarios.

\textit{Synthetic Datasets:} We construct three synthetic datasets (Syn-1, Syn-2, Syn-3) with varying complexity in treatment interactions. Each dataset contains 8-dimensional feature vectors with multi-category treatments exhibiting different cross-effect patterns. To simulate realistic observational bias, we generate 50\% observational data (with treatment selection bias) and 50\% randomized controlled trial (RCT) data. Each dataset comprises 64,000 training samples and 8,000 test samples.

\textit{Real-world Dataset:} We collected data from a ride-hailing platform's coupon experiment involving 546,262 passengers over one week. The treatment space consists of two service categories (Standard and Premium rides) with five discount levels (0\%, 5\%, 10\%, 15\%, 20\%) each, creating a $5^2$ multi-category, multi-valued treatment structure. We only collected 240 marketing environment features that are not related to the customer.

\textbf{Baselines.} We compare against four representative causal inference methods adapted for multi-category treatments: BLR~\cite{johansson2016learning}, CFRNet~\cite{shalit2017estimating}, TARNet~\cite{shalit2017estimating}, and DRCFR~\cite{hassanpour_learning_2020}. Since these methods were originally designed for binary treatments, we extend their architectures to handle our multi-category setting through separate outcome heads for each treatment combination.

% \kxp{Add VCNet}
\textbf{Evaluation Metrics.} We employ two complementary metrics: (1) Ranking Error (Spearman's Footrule Distance) measuring the deviation from optimal treatment ranking, and (2) our proposed MCMV-AUCC capturing cost-adjusted treatment effectiveness in multi-dimensional treatment spaces.

\setlength{\tabcolsep}{1pt}
\begin{table*}[t]
    \begin{tabularx}{\textwidth}{@{}ccc>{\compress}c@{}}
    \toprule
    \textbf{Synthetic-1} 
    &\textbf{Synthetic-2} 
    &\textbf{Synthetic-3} \\
    \midrule
    
    \begin{minipage}{0.33\textwidth}
      \centering
      \includegraphics[width=\linewidth, height=40mm]{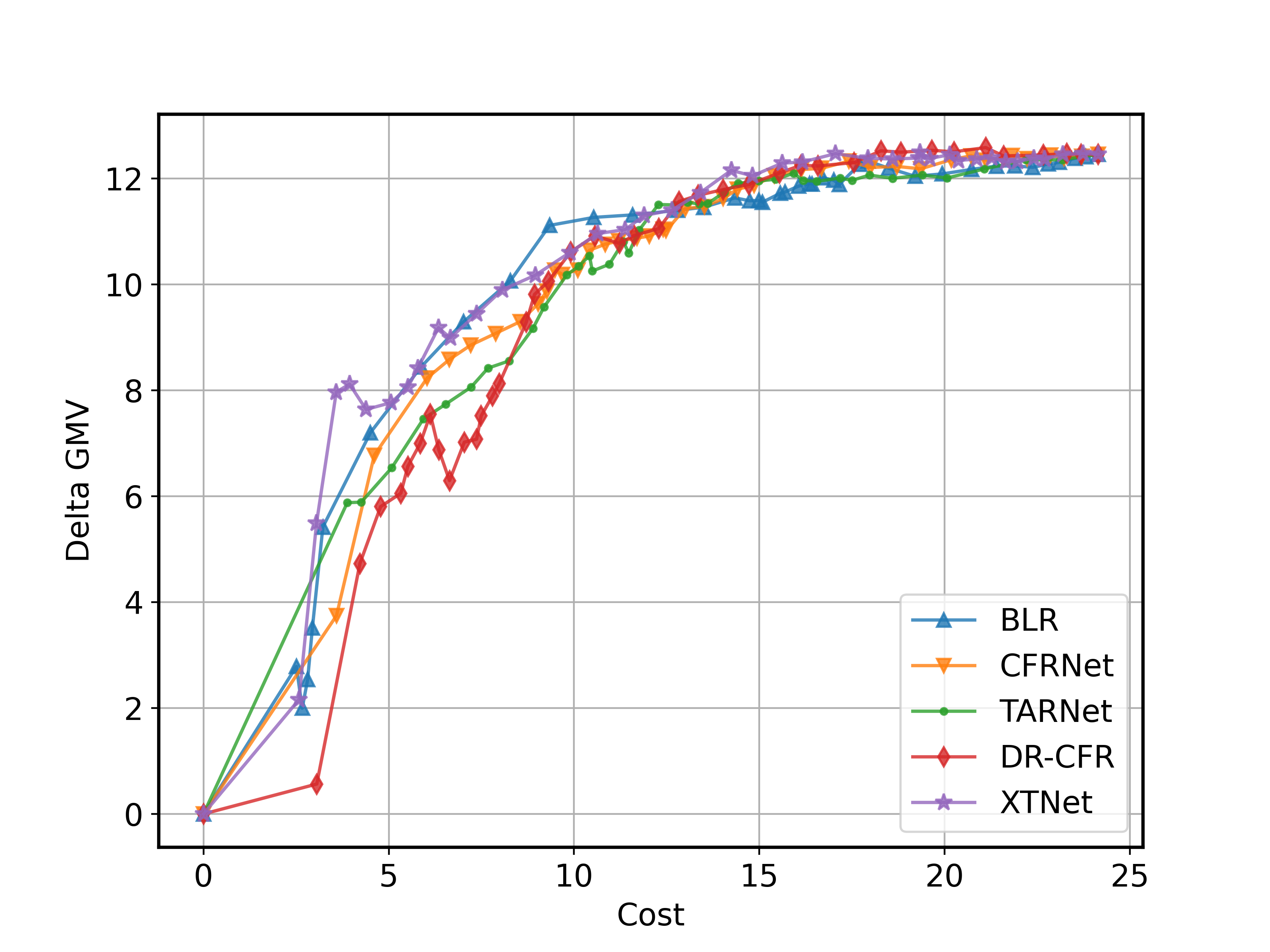}
    \end{minipage}
    &
    \begin{minipage}{0.33\textwidth}
      \centering
      \includegraphics[width=\linewidth, height=40mm]{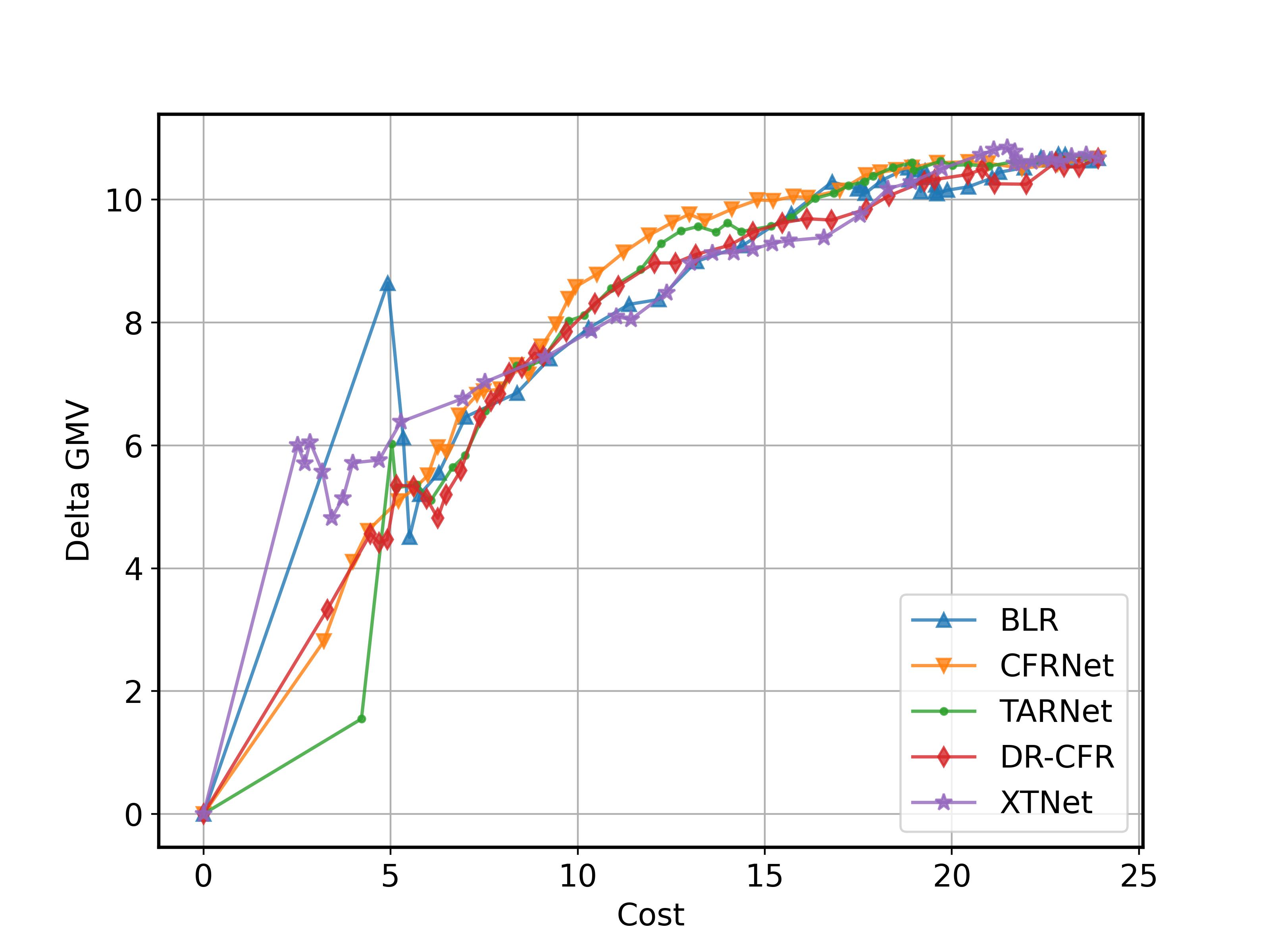}
    \end{minipage}
    &
    \begin{minipage}{0.33\textwidth}
      \centering
      \includegraphics[width=\linewidth, height=40mm]{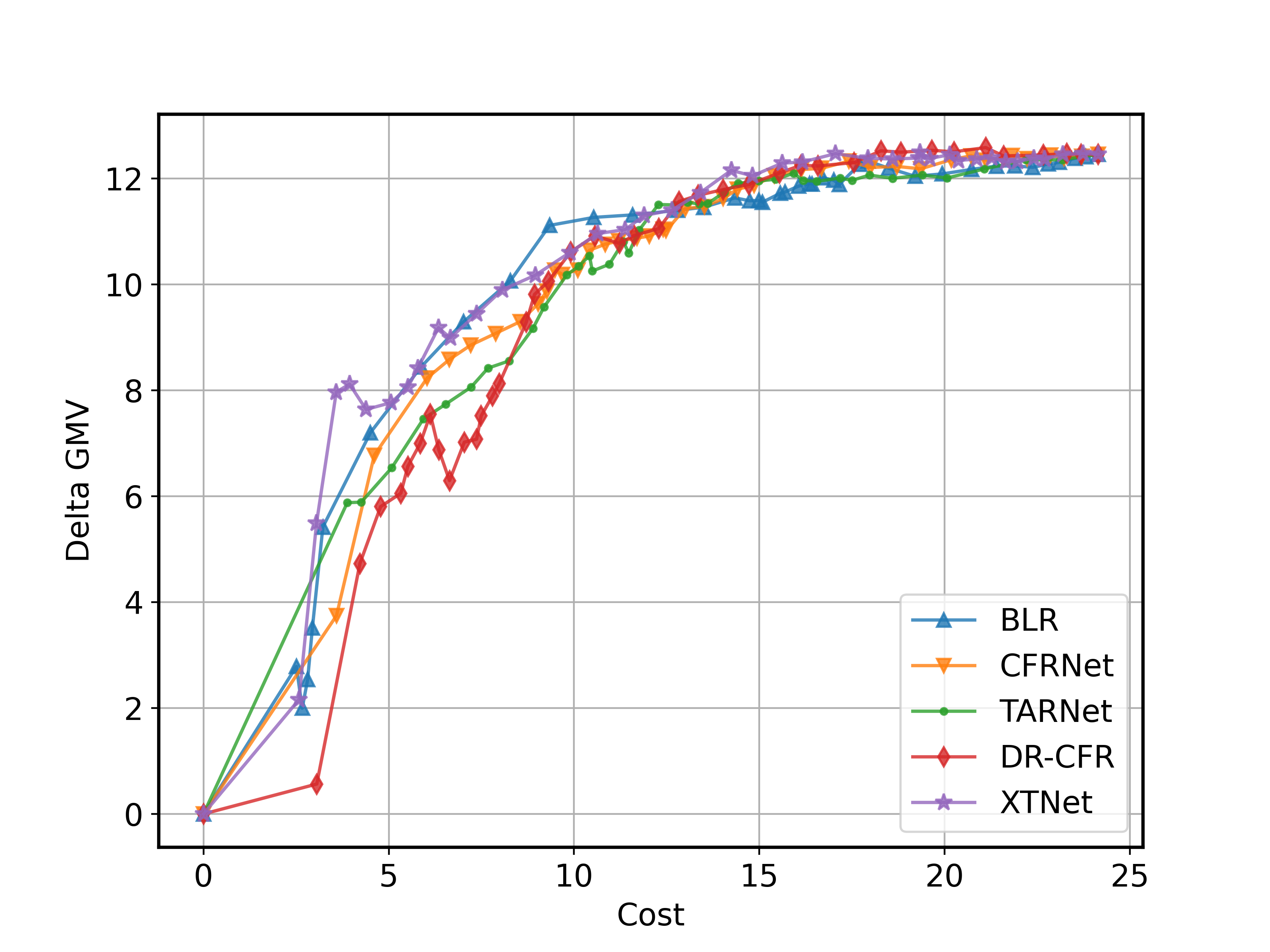}
    \end{minipage}
    \\
    \bottomrule
    \end{tabularx}
    \caption{MCMV-AUCC curves across synthetic datasets demonstrating XTNet's superior performance.}
    \vspace{-20pt}
    \label{tab:mcmv-curves}
\end{table*}

\begin{table}[t]
    \begin{tabularx}{0.47\textwidth}{@{}lXcc@{}}
    \toprule
    \textbf{Dataset}
    &\textbf{Configuration} 
    &\textbf{Ranking Error} 
    &\textbf{MCMV-AUCC}\\
    \midrule
    \multirow{4.0}{*}{Synthetic-1} & 
     w/o $\mathcal{L}_{imb}$ [BEM]
    & $0.2609 \pm 0.0324$ & $ 0.4157$  \\
    \cmidrule(lr){2-4} &
     w/$\mathcal{L}_{imb} $ [EM]
    & $0.2687 \pm 0.0794$ & $0.0183$  \\
    \cmidrule(lr){2-4} &
     w/$\mathcal{L}_{imb} $ [BEM]
    & $\textbf{0.2272} \pm 0.0433$ & $\mathbf{0.5375}$  \\
    \midrule
    \multirow{4.0}{*}{Synthetic-2} & 
     w/o $\mathcal{L}_{imb}$ [BEM]
    & $0.2576 \pm 0.0264$ & $\mathbf{0.0692}$  \\
    \cmidrule(lr){2-4} &
     w/$\mathcal{L}_{imb} $ [EM]
    & $0.2687 \pm 0.0794$ & $0.0183$  \\
    \cmidrule(lr){2-4} &
     w/$\mathcal{L}_{imb}$ [BEM]
    & $\textbf{0.2465} \pm 0.0433$ & ${0.0529}$  \\
    \midrule
    \multirow{4.0}{*}{Synthetic-3} & 
     w/o $\mathcal{L}_{imb}$ [BEM]
    & $0.2751 \pm 0.0316$ & ${0.0474}$  \\
    \cmidrule(lr){2-4} &
     w/ $\mathcal{L}_{imb}$ [EM]
    & $0.2949 \pm 0.1785$ & $-0.0711$  \\
    \cmidrule(lr){2-4} &
     w/ $\mathcal{L}_{imb}$ [BEM]
    & $\textbf{0.2726} \pm 0.0335$ & $\mathbf{0.0834}$  \\
    \bottomrule
    \end{tabularx}
    \caption{Ablation study on the imbalance loss component and BasicNet component. [BEM] denotes BasicNet+EffectNet+MaskNet. [EM] denotes EffectNet+MaskNet.}
    \vspace{-30pt}
    \label{tab:ablation-study-imb}
\end{table}

\subsection{Main Results (RQ1)}

Tables~\ref{tab:perf-syn} and~\ref{tab:perf-real} present our main experimental results. XTNet consistently outperforms all baseline methods across both synthetic and real-world datasets. On synthetic datasets, XTNet achieves the lowest ranking error (0.2272 on Syn-1) and substantially higher MCMV-AUCC scores. The performance gains are particularly pronounced in MCMV-AUCC, demonstrating XTNet's superior ability to capture cost-effective treatment allocation in multi-category scenarios.

On the real-world dataset, XTNet maintains its competitive edge with the lowest ranking error (0.1100) and highest MCMV-AUCC (1.1701), validating the practical applicability of our approach. The substantial performance gap in MCMV-AUCC across all datasets indicates that XTNet more effectively captures the complex interactions between multi-category treatments, which is crucial for real-world deployment.

\subsection{Ablation Analysis (RQ2)}

Table~\ref{tab:ablation-study-imb} presents our ablation study examining the contribution of the imbalance loss term $\mathcal{L}_{imb}$. The results demonstrate that including the imbalance loss consistently reduces ranking error across all synthetic datasets, with improvements of 6\% on average. While the effect on MCMV-AUCC varies across datasets, the overall trend indicates that the imbalance loss enhances the model's ability to handle treatment selection bias, which is crucial for real-world applications with observational data.

% The mixed results on MCMV-AUCC suggest that the imbalance loss primarily improves ranking quality rather than absolute treatment effect estimation, highlighting the complementary nature of our dual-objective optimization strategy.
We also conduct the ablation study of the BasicNet module. As shown in Table~\ref{tab:ablation-study-imb}, our BasicNet module is crucial for the overall performance and it cannot be dropped.

\subsection{Evaluation Metric Analysis (RQ3)}

Our theoretical analysis in Section 4 establishes that MCMV-AUCC provides lower metric error compared to traditional evaluation approaches for multi-category, multi-valued treatments. The empirical results validate this theoretical advantage: while baseline methods show inconsistent performance across different evaluation metrics, XTNet demonstrates robust superiority under MCMV-AUCC evaluation.

The substantial performance gaps observed in MCMV-AUCC compared to more modest gains in ranking error indicate that our proposed metric better captures the nuanced requirements of multi-category, multi-valued treatment optimization, particularly the cost-effectiveness considerations critical for practical deployment.

\begin{table}[t]
    \centering
    \begin{tabularx}{0.25\textwidth}{@{}lcc@{}}
    \toprule
    \textbf{Method} 
    & \textbf{GMV Gain} 
    & \textbf{Order Gain}\\
    \midrule
    Baseline & 0\% & 0\%  \\
    \midrule
    BLR & +2.43\% & +1.73\% \\
    \midrule
    XTNet & \textbf{+4.33}\% & \textbf{+2.10}\% \\
    \bottomrule
    \end{tabularx}
    \caption{Online A/B Test Results}
    \label{tab:online-ab-res}
    \vspace{-20pt}
\end{table}

\subsection{Online A/B Test}
We also conducted the A/B Test across 32 cities over 1 week on the ride-hailing platform. We use the trained network to estimate the finish rate with different discounts. As shown in Table~\ref{tab:online-ab-res}, our proposed XTNet achieves the highest GMV gain and Order gain among all models.

\section{Conclusion}

% In this paper, we investigate into the causal inference with the multi-value multi-category treatment setting which can be widely applied in real world. We propose a novel network architecture XTNet to estimate the cross effect in that scene. We also conduct comprehensive experiments to analyze our proposed method. To conduct proper evaluations, we also design the MVMC-AUCC metric to measure the uplift model by using marginal effect computation. The experimental results show that our proposed XTNet outperform the other conventional techniques.

This paper addresses the challenging problem of counterfactual causal inference in multi-category, multi-valued treatment scenarios. We introduce XTNet, a novel deep neural architecture that captures complex cross-treatment interactions via dynamic masking mechanisms and decomposition strategies without restrictive assumptions. We propose MCMV-AUCC, a tailored evaluation metric that demonstrates lower metric error compared to traditional approaches for multi-dimensional treatment assessment. Our comprehensive experiments on synthetic and real-world datasets show that XTNet consistently outperforms state-of-the-art baselines. The real-world A/B test results further confirm its effectiveness.

%%
%% The acknowledgments section is defined using the "acks" environment
%% (and NOT an unnumbered section). This ensures the proper
%% identification of the section in the article metadata, and the
%% consistent spelling of the heading.
% \begin{acks}
% To Robert, for the bagels and explaining CMYK and color spaces.
% \end{acks}

%%
%% The next two lines define the bibliography style to be used, and
%% the bibliography file.
\bibliographystyle{ACM-Reference-Format}
\bibliography{sample-base}

%%
%% If your work has an appendix, this is the place to put it.
\appendix
\section{Appendix}

\subsection{Qini v.s. AUCC Proofs}
\label{app:qini-aucc}
\begin{proof}
Consider the Qini allocation $S_Q(B) = \{(x_i,t_{j_i})\}$ for budget $B$, where treatments are ranked by $\mathbb{E}[Y(x,t)]$. Since Qini ignores cost, it typically selects higher-index treatments without efficiency consideration.

For each $(x_i, t_{j_i}) \in S_Q(B)$ with $j_i > 1$, define the budget-reallocated strategy: replace $(x_i, t_{j_i})$ with $(x_i, t_1)$ and use the saved budget $\Delta C_i = \mathbb{E}[C(x_i, t_{j_i})] - \mathbb{E}[C(x_i, t_1)] \geq 0$ to purchase the highest-RoI treatment available.

By Lemma \ref{lem:roi_ordering}, $\rho(x_i, t_1) > \rho(x_i, t_{j_i})$, which implies
\[
\mathbb{E}[Y(x_i, t_1)] \cdot \mathbb{E}[C(x_i, t_{j_i})] > \mathbb{E}[Y(x_i, t_{j_i})] \cdot \mathbb{E}[C(x_i, t_1)]
\]
Rearranging: $\frac{\mathbb{E}[Y(x_i, t_1)] \cdot \Delta C_i}{\mathbb{E}[C(x_i, t_1)]} > \mathbb{E}[Y(x_i, t_{j_i})] - \mathbb{E}[Y(x_i, t_1)]$.

Since the AUCC policy reallocates the saved budget $\Delta C_i$ to treatments with RoI at least $\rho(x_i, t_1)$, the net outcome improvement is strictly positive. Applying this argument to all treatments in $S_Q(B)$ yields $\mathbb{E}_{\Pi_{\text{AUCC}}}[O(B)] \geq \mathbb{E}_{\Pi_{\text{Qini}}}[O(B)]$ for all $B$. By Theorem \ref{thm:stoch_dominance}, $\mathcal{E}(M_{\text{AUCC}}) \leq \mathcal{E}(M_{\text{Qini}})$.    
\end{proof}

\subsection{MV-AUCC v.s. MCMV-AUCC}
\label{app:mv-aucc-mcmv-aucc}

We establish the superiority of MCMV-AUCC over MV-AUCC through an analysis of the fundamental limitations of local optimization in multi-dimensional treatment spaces.

\begin{assumption}[Interaction Effects Structure]
\label{ass:interaction_structure}
The outcome function has the additive form with bounded interaction effects:
\[
\mathbb{E}[Y(x,\mathbf{t})] = \sum_{j=1}^k f_j(x, t^{(j)}) + \sum_{1 \leq i < j \leq k} g_{ij}(x, t^{(i)}, t^{(j)})
\]
where:
\begin{enumerate}
\item $f_j(x, \cdot)$ are strictly increasing and concave for all $j \in \{1,\ldots,k\}$
\item $g_{ij}(x, s, t) \geq 0$ for all $s, t > 0$ (positive interactions)
\item $|g_{ij}(x, s, t)| \leq G < \infty$ for some constant $G$ (bounded interactions)
\item $g_{ij}(x, s, t) = 0$ if $s = 0$ or $t = 0$ (no interaction without both treatments)
\end{enumerate}
\end{assumption}

\begin{assumption}[Ordering Error Frequency]
\label{ass:ordering_error_freq}
For treatments $\mathbf{t}_A, \mathbf{t}_B$ with $Q(\mathbf{t}_A) = Q(\mathbf{t}_B) = q$ and $I(\mathbf{t}_A) = 0 < I(\mathbf{t}_B)$ where $I(\mathbf{t}) = \sum_{i<j} \mathbf{1}[t^{(i)} > 0, t^{(j)} > 0]$, the probability of ordering error satisfies:
\[
\mathbb{P}[\mu^*(x, \mathbf{t}_A) > \mu^*(x, \mathbf{t}_B) \text{ but } \mathbb{E}[Y(x, \mathbf{t}_B)] > \mathbb{E}[Y(x, \mathbf{t}_A)]] \geq \phi(k, G)
\]
where $\phi(k, G) = \min\left\{\frac{G}{2\max_j f_j'(0)}, \frac{1}{2}\right\}$ is the minimum ordering error probability.
\end{assumption}

\begin{assumption}[Treatment Space Density]
\label{ass:treatment_density}
For each intensity level $q \geq k$, the number of treatments with non-zero interaction effects satisfies:
\[
|\{\mathbf{t} : Q(\mathbf{t}) = q, I(\mathbf{t}) > 0\}| \geq \rho \cdot N(q)
\]
where $N(q) = |\{\mathbf{t} : Q(\mathbf{t}) = q\}|$ and $\rho \in (0,1]$ is the interaction density parameter.
\end{assumption}

\begin{theorem}[MCMV-AUCC Superiority with Concrete Bounds]
\label{thm:mcmv_superiority_concrete}
Under Assumptions \ref{ass:stoch_monotonicity}, \ref{ass:stoch_convexity}, \ref{ass:bounded_moments}, \ref{ass:grouping_coherence}, \ref{ass:interaction_structure}, \ref{ass:ordering_error_freq}, and \ref{ass:treatment_density}, MCMV-AUCC achieves lower expected metric error than MV-AUCC:
\[
\mathcal{E}(M_{\text{MCMV-AUCC}}) \leq \mathcal{E}(M_{\text{MV-AUCC}}) - \Delta_{\min}
\]
where $\Delta_{\min} = \rho \cdot \phi(k, G) \cdot \frac{G}{2} \cdot \mathbb{E}[B_{\max}]$ is the minimum improvement bound.
\end{theorem}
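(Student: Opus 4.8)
The plan is to reduce the statement to the exact error identity obtained in the proof of Theorem~\ref{thm:stoch_dominance} and then bound the resulting integrand from below using the three structural assumptions introduced in this appendix. First I would observe that Assumption~\ref{ass:grouping_coherence} makes the grouped marginal returns $\gamma(x,q)$ monotone and diminishing, so the MCMV-AUCC policy is well defined, and the group-by-group exchange argument behind Theorem~\ref{thm:mv_aucc_mcmv_aucc_dimension} gives $O_{\text{MCMV}}(B) \ge O_{\text{MV}}(B)$ for every budget $B$ and every sample realization, while optimality gives $O^{*}(B) \ge O_{\text{MCMV}}(B)$. Feeding these into Definition~\ref{def:expected_metric_error} and using Assumption~\ref{ass:bounded_moments} to interchange expectation and integration (bounded second moments make all outcomes integrable), I obtain
\[
\mathcal{E}(M_{\text{MV-AUCC}}) - \mathcal{E}(M_{\text{MCMV-AUCC}}) = \mathbb{E}_{P_X,F}\!\left[\int_0^{B_{\max}}\!\big(O_{\text{MCMV}}(B) - O_{\text{MV}}(B)\big)\,dB\right],
\]
so it suffices to show the integrand is at least the constant $\rho\,\phi(k,G)\,G/2$ for every $B\in[0,B_{\max}]$.

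Next I would lower-bound this per-budget gap. Fix $(P_X,F)$ and a budget level $B$; the two policies disagree only on treatments that MV-AUCC mis-orders relative to the group-averaged metric $\gamma$. Under the additive-plus-interaction form of Assumption~\ref{ass:interaction_structure}, a concentrated treatment $\mathbf{t}_A$ with $I(\mathbf{t}_A)=0$ can have inflated marginal RoI $\mu^{*}(x,\mathbf{t}_A)$ (because each $f_j$ is concave) yet strictly lower true outcome than a distributed treatment $\mathbf{t}_B$ of the same total intensity, since MV-AUCC discards the interaction mass $\sum_{i<j} g_{ij}(x,t_B^{(i)},t_B^{(j)})\in[0,G]$, which by item~(4) of Assumption~\ref{ass:interaction_structure} is carried entirely by distributed treatments. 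Assumption~\ref{ass:treatment_density} guarantees that at each intensity level $q\ge k$ a fraction at least $\rho$ of treatments carry positive interaction, and Assumption~\ref{ass:ordering_error_freq} says each such treatment is mis-ranked below an interaction-free competitor with probability at least $\phi(k,G)$; crucially, the calibration $\phi(k,G)=\min\{G/(2\max_j f_j'(0)),\,1/2\}$ is chosen so that in every such error event the suppressed interaction value, i.e.\ the outcome gap $\delta_{ij}=\mathbb{E}[Y(x,\mathbf{t}_B)]-\mathbb{E}[Y(x,\mathbf{t}_A)]$, is at least $G/2$. Because MCMV-AUCC's grouping forces allocation to the higher-outcome member of each intensity class, every unit of budget MV-AUCC misspends on a mis-ranked concentrated treatment is recovered by MCMV-AUCC with an extra outcome of at least $G/2$; aggregating over the corrected errors (density at least $\rho$, frequency at least $\phi(k,G)$, per-error gain at least $G/2$) gives
\[
O_{\text{MCMV}}(B) - O_{\text{MV}}(B) \;\ge\; \rho\,\phi(k,G)\,\frac{G}{2}, \qquad B\in[0,B_{\max}].
\]

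Finally I would integrate this pointwise bound over $[0,B_{\max}]$ and take the outer expectation, giving
\[
\mathcal{E}(M_{\text{MV-AUCC}}) - \mathcal{E}(M_{\text{MCMV-AUCC}}) \;\ge\; \rho\,\phi(k,G)\,\frac{G}{2}\,\mathbb{E}[B_{\max}] \;=\; \Delta_{\min},
\]
with strict inequality whenever $G>0$, i.e.\ whenever interactions are present, which is the assertion. The main obstacle will be the per-error lower bound $\delta_{ij}\ge G/2$ in the ordering-error event and the bookkeeping that converts the probabilistic mis-ranking frequency into a deterministic per-budget outcome gap without double-counting treatments across budget thresholds. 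Making this rigorous requires combining the concavity of each $f_j$ (to show the marginal RoI of a concentrated treatment is genuinely inflated relative to its total outcome) with $g_{ij}(x,s,t)=0$ when $s=0$ or $t=0$ (so the interaction value is a pure property of distributed treatments), plus a careful verification that the group-by-group exchange stays feasible under the budget constraint $\mathbb{E}_{X}\mathbb{E}_{\pi}[\sum C(x,t)]\le B$ at every intermediate step — essentially the greedy-exchange technique of Theorems~\ref{thm:qini_aucc_prob} and~\ref{thm:aucc_mv_aucc_exchange}, but applied at the level of intensity groups rather than individual treatments.
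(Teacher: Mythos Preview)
Your overall strategy matches the paper's: reduce to the integral identity from Theorem~\ref{thm:stoch_dominance}, identify mis-ranked treatment pairs using the concentrated/distributed contrast under Assumption~\ref{ass:interaction_structure}, count them via Assumptions~\ref{ass:ordering_error_freq} and~\ref{ass:treatment_density}, and attach a per-error outcome gain of at least $G/2$. The paper even instantiates the same concentrated/distributed comparison explicitly with $\mathbf{t}_A=(q,0,\ldots,0)$ versus $\mathbf{t}_B=(1,\ldots,1,q-k+1,0,\ldots,0)$.

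However, your key quantitative step is not right. You claim the integrand satisfies
\[
O_{\text{MCMV}}(B) - O_{\text{MV}}(B) \;\ge\; \rho\,\phi(k,G)\,\frac{G}{2}\quad\text{for every }B\in[0,B_{\max}],
\]
a strictly positive constant independent of $B$. This fails immediately at $B=0$, where both policies allocate nothing and the gap is zero; more generally, for small $B$ the number of treatments either policy can afford shrinks, so the number of correctable mis-rankings must shrink with it. Your own phrasing (``every unit of budget MV-AUCC misspends'') already hints that the gain should be proportional to the budget consumed, not a flat constant, but the displayed bound does not reflect this.

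The paper handles this differently: it counts the number of treatments in the MV-AUCC allocation as approximately $B/\bar{C}$, where $\bar{C}$ is an average treatment cost, so the per-budget gap becomes
\[
O_{\text{MCMV}}(B) - O_{\text{MV}}(B) \;\ge\; \rho\,\phi(k,G)\,\frac{G}{2}\cdot\frac{B}{\bar{C}},
\]
which is linear in $B$ and vanishes at $B=0$ as it must. Integrating this linear bound over $[0,B_{\max}]$ then produces a $B_{\max}^2/(2\bar{C})$ factor rather than the $\mathbb{E}[B_{\max}]$ factor you obtained. Your ``bookkeeping'' worry is exactly the right place to look, but the resolution is to let the gap scale with the number of allocated treatments, not to assert a constant floor.
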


\begin{proof}
The superiority of MCMV-AUCC stems from MV-AUCC's systematic ordering errors caused by erroneous marginal RoI estimations within incomparable treatment groups.

Under Assumption \ref{ass:interaction_structure}, MV-AUCC computes marginal RoI as 
\begin{align}
    \mu^*(x,\mathbf{t}) &= \max_{j: t^{(j)} > 0} \frac{f_j(x, t^{(j)}) - f_j(x, t^{(j)} - 1)}{\mathbb{E}[C(x,\mathbf{t})] - \mathbb{E}[C(x,\mathbf{t} - \mathbf{e}_j)]} \nonumber \\
    &+ \frac{ \sum_{i \neq j} [g_{ij}(x, t^{(i)}, t^{(j)}) - g_{ij}(x, t^{(i)}, t^{(j)} - 1)]}{\mathbb{E}[C(x,\mathbf{t})] - \mathbb{E}[C(x,\mathbf{t} - \mathbf{e}_j)]}. \nonumber
\end{align} 
This calculation captures only partial interaction effects (those involving the decremented category $j$) while ignoring other interaction terms.

Consider treatments $\mathbf{t}_A = (q, 0, \ldots, 0)$ and $\mathbf{t}_B = (1, 1, \ldots, 1, q-k+1, 0, \ldots, 0)$ with the same intensity $q \geq k$. The total outcomes are:
\begin{align}
\mathbb{E}[Y(x, \mathbf{t}_A)] &= f_1(x, q) \\
\mathbb{E}[Y(x, \mathbf{t}_B)] &= \sum_{j=1}^{k-1} f_j(x, 1) + f_k(x, q-k+1) \\&+ \sum_{1 \leq i < j \leq k-1} g_{ij}(x, 1, 1) + \sum_{j=1}^{k-1} g_{jk}(x, 1, q-k+1)
\end{align}

By Assumption \ref{ass:interaction_structure}, the interaction terms in $\mathbf{t}_B$ contribute at least $\binom{k-1}{2} g_{\min} + (k-1)g_{\min} \geq (k-1)g_{\min}$ where $g_{\min} > 0$ is the minimum positive interaction value. Under concavity of $f_j$, we have $f_1(x, q) < q \cdot f_1'(0)$ and $\sum_{j=1}^{k-1} f_j(x, 1) + f_k(x, q-k+1) \geq (q-1) \cdot \min_j f_j'(0)$.

Therefore, when $g_{\min} > \frac{f_1'(0) - \min_j f_j'(0)}{\max\{k-1, 1\}}$, we have $\mathbb{E}[Y(x, \mathbf{t}_B)] > \mathbb{E}[Y(x, \mathbf{t}_A)]$. However, $\mu^*(x, \mathbf{t}_A) = \frac{f_1(x, q) - f_1(x, q-1)}{\Delta c_A}$ may exceed $\mu^*(x, \mathbf{t}_B)$ when $f_1$ exhibits strong marginal returns in the concentrated allocation.

By Assumption \ref{ass:ordering_error_freq}, this ordering error occurs with probability at least $\phi(k, G)$. The expected outcome difference when the error occurs is bounded below by $\frac{G}{2}$ (half the maximum interaction effect).

Under Assumption \ref{ass:treatment_density}, at least $\rho \cdot N(q)$ treatments at each intensity level $q$ have positive interactions. For any budget $B$, MV-AUCC's allocation includes approximately $\frac{B}{\bar{C}}$ treatments where $\bar{C}$ is the average treatment cost. The fraction of these at intensity levels with interaction opportunities is bounded below by $\rho$.

Therefore, the expected performance gap is:
\begin{align}
\mathbb{E}_{\Pi_{\text{MCMV}}}[O(B)] - \mathbb{E}_{\Pi_{\text{MV}}}[O(B)] &\geq \rho \cdot \phi(k, G) \cdot \frac{G}{2} \cdot \frac{B}{\bar{C}} \\
&= \rho \cdot \phi(k, G) \cdot \frac{G}{2\bar{C}} \cdot B
\end{align}

Integrating over all budgets $B \in [0, B_{\max}]$ yields:
\[
\mathcal{E}(M_{\text{MV-AUCC}}) - \mathcal{E}(M_{\text{MCMV-AUCC}}) \geq \rho \cdot \phi(k, G) \cdot \frac{G}{2\bar{C}} \cdot \frac{B_{\max}^2}{2} = \Delta_{\min}
\]

This establishes the concrete lower bound on the improvement achieved by MCMV-AUCC.
\end{proof}

{\renewcommand{\arraystretch}{1.2}%
\begin{table}[t]
\centering
\caption{Definitions of variables and formulas in the generation process of synthetic data 1}
\label{tab:data_generation_formulas}
\begin{tabular}{cl}
\toprule
Symbol & Formula \\
\midrule
$x^{\text{norm}}$ & $\displaystyle x^{\text{norm}} =  \frac{x - x_{\text{min}}}{x_{\text{max}} - x_{\text{min}}}$ \\
$\epsilon_{\text{gaussian}}$ & $\displaystyle \epsilon_{\text{gaussian}} \sim \mathcal{N}(0, \sigma)$ \\
$\epsilon_{\text{uniform}}$ &  $\displaystyle \epsilon_{\text{uniform}} \sim \mathcal{U}(a, b), \quad \sigma = r$ \\
$x_i'$ & $\displaystyle x_i' = x_i^{\text{norm}},\ i=1,\dots,8$ \\
$t$ & 
$
\begin{aligned}[t]
t &= \sum_{i=1}^5 a_ix_i' + \sum_{i=1}^4 a_ix_i'x_{i+1}' \\
  &\quad + x_1'^2 + \sin(x_3') + \exp(-x_5')
\end{aligned} 
$
\\
$t_{\text{noisy}}$ & $\displaystyle t_{\text{noisy}} = t + \epsilon_{\text{gaussian}} + \epsilon_{\text{uniform}}$ \\
$t$ (discrete) & 
$\displaystyle \text{bins} = \{b_0, b_1, \dots, b_{m}\},\ b_i = \frac{i}{m},\ i = 0, 1, \dots, m$ \\
& $\displaystyle t = \arg \max_{i} \{b_{i-1} < t_{\text{noisy}}^{\text{norm}} \leq b_i \}$ \\
$S$ & 
$\begin{aligned}[t]
S &= f(x_1', ..., x_8') = \sum_{i=1}^{8} a_i x_i' + a_9 x_1' x_2' + a_{10} x_3' x_4' \\
&\quad + a_{11} x_4' x_5' x_6' + a_{12} x_6' x_8' x_1' + a_{13} x_5'^2 \\
&\quad + a_{14} \sin(x_8') + a_{15} e^{-x_3'}
\end{aligned}$\\
$t_1^{\text{effect}}$ & 
$\begin{aligned}[t]
t_1^{\text{effect}} &= 3 \cdot w_{t1} \cdot \left(e^{-t_1/5} - e^{1/5}\right) \\
&\quad \cdot \left( \sin(x_3') - e^{-x_7'-x_5'} + \sqrt{x_3' + x_6'^2} \right)
\end{aligned} $\\
$t_2^{\text{effect}}$ & 
$\begin{aligned}[t]
t_2^{\text{effect}} &= -w_{t2} \cdot \left(\ln(5 t_2 + 0.5) - \ln(1.5)\right) \\
&\quad \cdot \left( \cos(x_3') - x_4' x_2' + x_6'^2 + |x_1'| \right)
\end{aligned}$ \\
$t_{1,2}^{\text{effect}}$ & 
$\begin{aligned}[t]
t_{1,2}^{\text{effect}} &= w_{t3} \cdot \ln((5 t_1 - 1)(t_1 + t_2 - 1)) \\
&\quad \cdot \left( -x_1' + x_2'^3 \right)
\end{aligned} $\\
$t^{\text{effect}}$ & 
$\displaystyle t^{\text{effect}} = t_1^{\text{effect}} + t_2^{\text{effect}} + t_{1,2}^{\text{effect}}$ \\
$S_{\text{noisy}}$ &  
$\displaystyle S_{\text{noisy}} = S + t^{\text{effect}} + \epsilon_{\text{gaussian}} + \epsilon_{\text{uniform}}$ \\
$y$ (continuous) & 
$\displaystyle y = S_{\text{noisy}}^{\text{norm}}$ \\
$y$ (discrete) & 
$\displaystyle \text{bins} = \{b_0, b_1, \dots, b_m\},\ b_i = \frac{i}{m},\ i = 0, 1, \dots, m$ \\
& $\displaystyle y = \arg \max_{i} \{ b_{i-1} < S_{\text{noisy}}^{\text{norm}} \leq b_i \}$ \\
\bottomrule
\end{tabular}
\end{table}
}

\subsection{Formulation of Synthetic Data}

The synthetic data generation process employs a comprehensive framework to simulate complex relationships and noise structures. Input features \(x\) are first normalized to \(x^{\text{norm}}\) using min-max scaling. The framework introduces both Gaussian (\(\epsilon_{\text{gaussian}} \sim \mathcal{N}(0, \sigma)\)) and uniform (\(\epsilon_{\text{uniform}} \sim \mathcal{U}(a, b)\)) noise components, where \(\sigma = r\) controls the noise magnitude. The intermediate target \(t\) combines linear terms (\(\sum a_ix_i'\)), pairwise interactions (\(\sum a_ix_i'x_{i+1}'\)), and nonlinear transformations (e.g., \(x_1'^2, \sin(x_3'), \exp(-x_5')\)), with \(t_{\text{noisy}}\) incorporating additive noise. For discrete outcomes, \(t_{\text{noisy}}^{\text{norm}}\) is binned into \(m\) categories. The composite score \(S\) extends this with higher-order interactions (e.g., \(x_4'x_5'x_6'\)) and additional nonlinearities (e.g., \(\sin(x_8'), e^{-x_3'}\)). Treatment effects (\(t_1^{\text{effect}}, t_2^{\text{effect}}, t_{1,2}^{\text{effect}}\)) are modeled through weighted combinations of logarithmic, trigonometric, and polynomial functions of treatments \(t_1, t_2\) and features, with \(t^{\text{effect}}\) representing their cumulative impact. The final output \(y\) is derived from \(S_{\text{noisy}}^{\text{norm}}\) (continuous) or its binned discretization, where \(S_{\text{noisy}}\) integrates the base score, treatment effects, and noise components. This design enables simulation of realistic data with configurable nonlinearities, noise levels, and treatment responses.

\subsubsection{Synthetic Data 1}

The synthetic data generation process employs a comprehensive framework to simulate complex relationships and noise structures. Input features \(x\) are first normalized to \(x^{\text{norm}}\) using min-max scaling. The framework introduces both Gaussian (\(\epsilon_{\text{gaussian}} \sim \mathcal{N}(0, \sigma)\)) and uniform (\(\epsilon_{\text{uniform}} \sim \mathcal{U}(a, b)\)) noise components, where \(\sigma = r\) controls the noise magnitude. The intermediate target \(t\) combines linear terms (\(\sum a_ix_i'\)), pairwise interactions (\(\sum a_ix_i'x_{i+1}'\)), and nonlinear transformations (\(x_1'^2, \sin(x_3'), \exp(-x_5')\)), with \(t_{\text{noisy}}\) incorporating additive noise. For discrete outcomes, \(t_{\text{noisy}}^{\text{norm}}\) is binned into \(m\) categories. The composite score \(S\) extends this with higher-order interactions (e.g., \(x_4'x_5'x_6'\)) and additional nonlinearities (\(\sin(x_8'), e^{-x_3'}\)). Treatment effects (\(t_1^{\text{effect}}, t_2^{\text{effect}}, t_{1,2}^{\text{effect}}\)) are modeled through weighted combinations of logarithmic, trigonometric, and polynomial functions of treatments \(t_1, t_2\) and features, with \(t^{\text{effect}}\) representing their cumulative impact. The final output \(y\) is derived from \(S_{\text{noisy}}^{\text{norm}}\) (continuous) or its binned discretization, where \(S_{\text{noisy}}\) integrates the base score, treatment effects, and noise components. This design enables simulation of realistic data with configurable nonlinearities, noise levels, and treatment responses.

\subsubsection{Synthetic Data 2}
The synthetic data 2 generation process represents a less complex alternative to the previous design, evidenced by: (1) fewer interaction terms (two versus four) and nonlinear transformations (two versus three) in \(S\); (2) simplified treatment effects without logarithmic or complex trigonometric components; and (3) reduced feature utilization in effect calculations, employing additive rather than multiplicative feature combinations. The streamlined structure facilitates computational efficiency while preserving sufficient complexity for model validation tasks.

\subsubsection{Synthetic Data 3}
This simplified framework differs from previous versions through: (1) complete removal of all nonlinear transformations (quadratic, trigonometric, exponential) in both target and score calculations; (2) elimination of all higher-order feature interactions; (3) reduction of treatment effect components from complex logarithmic/trigonometric functions to basic linear operations; and (4) simplified noise injection using only additive terms without combined noise effects. The resulting dataset maintains a deliberately elementary structure suitable for benchmarking basic model capabilities or serving as a control condition in methodological comparisons.

{\renewcommand{\arraystretch}{1.3}%
\begin{table}[t]
\centering
\caption{Definitions of variables and formulas in the generation process of synthetic data 2}
\label{tab:data_generation_formulas_1}
\begin{tabular}{cl}
\toprule
Symbol & Formula \\
\midrule
$x^{\text{norm}}$ & $\displaystyle x^{\text{norm}} =  \frac{x - x_{\text{min}}}{x_{\text{max}} - x_{\text{min}}}$ \\
$\epsilon_{\text{gaussian}}$ & $\displaystyle \epsilon_{\text{gaussian}} \sim \mathcal{N}(0, \sigma)$ \\
$\epsilon_{\text{uniform}}$ & $\displaystyle \epsilon_{\text{uniform}} \sim \text{Uniform}\left(a, b \right)$ \\
$x_i'$ & $\displaystyle x_i' = x_i^{\text{norm}},\ i=1,\dots,8$ \\
$t$ &
$\begin{aligned}[t]
t &= \sum_{i=1}^{3} a_i x_i' + \sum_{i=1}^{3} a_i x_i' x_{i+1}' + x_3'^2
\end{aligned} $\\
$t_{\text{noisy}}$ & $\displaystyle t_{\text{noisy}} = t + \epsilon_{\text{uniform}}$ \\
$t$ (discrete) &
$\displaystyle \text{bins} = \{b_0, b_1, \dots, b_m\},\ b_i = \frac{i}{m},\ i = 0, 1, \dots, m$ \\
& $\displaystyle t = \arg \max_{i} \{ b_{i-1} < t_{\text{noisy}}^{\text{norm}} \leq b_i \}$ \\
$t_i'$ & $\displaystyle t_i' = t_i^{\text{norm}},\ i = 1,2$ \\
$S$ &
$\begin{aligned}[t]
S &= f(x_1', ..., x_8') = \sum_{i=1}^{8} a_i x_i' + a_9 x_1' x_2' + a_{10} x_3' x_4' \\
&\quad + a_{13} x_5'^2 + a_{14} \sin(x_8')
\end{aligned}$ \\
$t_1^{\text{effect}}$ &
$\displaystyle t_1^{\text{effect}} = w_{t1} \cdot t_1' \cdot (x_1' + x_2' + x_3')$ \\
$t_2^{\text{effect}}$ &
$\displaystyle t_2^{\text{effect}} = -w_{t2} \cdot t_2' \cdot (x_4' + x_5' + x_6')$ \\
$t^{\text{effect}}$ &
$\begin{aligned}[t]
t^{\text{effect}} &= t_1^{\text{effect}} + t_2^{\text{effect}} \\
&\quad + 2 \cdot w_{t3} \cdot (t_1' \cdot t_2') \cdot (-x_7' + x_8')
\end{aligned} $\\
$S_{\text{noisy}}$ & $\displaystyle S_{\text{noisy}} = S + t^{\text{effect}} + \epsilon_{\text{gaussian}} + \epsilon_{\text{uniform}}$ \\
$y$ (continuous) & $\displaystyle y = S_{\text{noisy}}^{\text{norm}}$ \\
$y$ (discrete) &
$\displaystyle \text{bins} = \{b_0, b_1, \dots, b_m\},\ b_i = \frac{i}{m},\ i = 0, 1, \dots, m$ \\
& $\displaystyle y = \arg \max_{i} \{ b_{i-1} < S_{\text{noisy}}^{\text{norm}} \leq b_i \}$ \\
\bottomrule
\end{tabular}
\end{table}
}

{\renewcommand{\arraystretch}{1.3}%
\begin{table}[t]
\centering
\caption{Definitions of variables and formulas in the generation process of synthetic data 3}
\label{tab:data_generation_formulas_3}
\begin{tabular}{cl}
\toprule
Symbol & Formula \\
\midrule
$x^{\text{norm}}$ & $\displaystyle x^{\text{norm}} = \frac{x - x_{\text{min}}}{x_{\text{max}} - x_{\text{min}}}$ \\
$\epsilon_{\text{gaussian}}$ & $\displaystyle \epsilon_{\text{gaussian}} \sim \mathcal{N}(0, \sigma), \quad \sigma = r$ \\
$x_i'$ & $\displaystyle x_i' = x_i^{\text{norm}},\ i=1,\dots,8$ \\
$t$ &
$\displaystyle t = \sum_{i=1}^{3} a_i x_i'$ \\
$t_{\text{noisy}}$ & $\displaystyle t_{\text{noisy}} = t + \epsilon_{\text{uniform}}$ \\
$t$ (discrete) &
$\displaystyle \text{bins} = \{b_0, b_1, \dots, b_m\},\ b_i = \frac{i}{m},\ i = 0, 1, \dots, m$ \\
& $\displaystyle t = \arg \max_{i} \{ b_{i-1} < t_{\text{noisy}}^{\text{norm}} \leq b_i \}$ \\
$t_i'$ & 
$\displaystyle t_i' = t_i^{\text{norm}} ,\ i = 1,2 $ \\
$S$ & $\displaystyle S = f(x_1', ..., x_8') = \sum_{i=1}^{8} a_i x_i'$ \\
$t_1^{\text{effect}}$ & $\displaystyle t_1^{\text{effect}} = w_{t1} \cdot t_1' \cdot (x_1' + x_2')$ \\
$t_2^{\text{effect}}$ & $\displaystyle t_2^{\text{effect}} = -w_{t2} \cdot t_2' \cdot (x_4' + x_5')$ \\
$t^{\text{effect}}$ & 
$\begin{aligned}[t]
t^{\text{effect}} &= t_1^{\text{effect}} + t_2^{\text{effect}} \\
&\quad + 2 \cdot w_{t3} \cdot (t_1' \cdot t_2') \cdot (x_2' - x_6')
\end{aligned} $\\
$S_{\text{noisy}}$ & $\displaystyle S_{\text{noisy}} = S + t^{\text{effect}} + \epsilon_{\text{gaussian}}$ \\
$y$ (continuous) & $\displaystyle y = S_{\text{noisy}}^{\text{norm}}$ \\
$y$ (discrete) &
$\displaystyle \text{bins} = \{b_0, b_1, \dots, b_m\},\ b_i = \frac{i}{m},\ i = 0, 1, \dots, m$ \\
& $\displaystyle y = \arg \max_{i} \{ b_{i-1} < S_{\text{noisy}}^{\text{norm}} \leq b_i \}$ \\
\bottomrule
\end{tabular}
\end{table}
}

\subsection{Adaptations of Baseline Models}

Given the new scenario where treatments are categorized into \( m \) categories and each category has a multi-value treatment (with the \( k \)-th category having \( a_k \) values), we will adapt BLR, TarNet, CFRNet, and DR-CFR to accommodate this complexity.

In its original form, BLR concatenates a one-dimensional treatment variable in the middle layer to predict the outcome. This approach is straightforwardly extendable to multi-value treatments by adjusting the dimensionality of the concatenated treatment vector. To handle multiple categories of treatments, the model should concatenate an \( m \)-dimensional vector representing all treatment categories at the intermediate layer instead of just one. 

TarNet utilizes shared layers to learn common representations before employing a dual-head structure to estimate outcomes for treated and control groups separately. Transition from a dual-head to a multi-head architecture is suitable for MCMV-treatment scenario. And each head is designed to predict outcomes under specific combinations of treatments across all categories. The total number of heads would be \( \prod_{i}^{m} a_i \), reflecting the sum of all possible treatment value combinations across categories.

Building upon TarNet, CFRNet introduces additional loss terms to minimize the distance between the distributions of treated and control groups using integral probability metrics (IPM), such as the Wasserstein distance or Maximum Mean Discrepancy (MMD). Adaptation to CFRNet is similar to TarNet in structure, but with a focus on refining the IPM regularization. Specifically, the Wasserstein distance is extended to accommodate multi-dimensional treatments.

Moreover, an additional adaptation is introduced to BLR. Specifically, the IPM regularization is applied to the middle layer where the $ m $-dimensional treatment combination vector is concatenated. This modification aims to align the latent feature distributions between different treatment groups, thereby reducing selection bias inherent in observational data.

As for DR-CFR, we preserve its original causal graph structure while modifying the output architecture to enable counterfactual outcome prediction under complex treatment combinations. Specifically, the final output layer is restructured into a dual-tower framework, where one tower is responsible for estimating the baseline potential outcome under the control condition, and the other tower predicts individual-level sensitivity parameters that characterize the effect of each treatment component as well as their interactions. In the case of two-category multi-value treatments, the control tower predicts $\hat{y}^i(t^{(1)}=0, t^{(2)}=0)$, representing the counterfactual outcome when both treatments are at their baseline levels, while the parameter tower outputs the elasticity coefficients $z_1$ and $z_2$ corresponding to the individual’s responsiveness to treatment $t^{(1)}$ and $t^{(2)}$, along with $z_3$, which captures the interaction effect between the two treatments. The final predicted outcome under a specific treatment combination is then formulated as $y^i(t^{(1)}, t^{(2)}) = z_1 \cdot t^{(1)} + z_2 \cdot t^{(2)} + z_3 \cdot t^{(1)} \cdot t^{(2)}$, allowing for interpretable estimation of heterogeneous treatment effects in multi-dimensional treatment spaces.

\subsection{Implementation Details}

All experiments were conducted using PyTorch on NVIDIA 2080Ti GPUs. We used Adam optimizer with learning rate 0.01 and trained for 20 epochs. The loss coefficients were set to $\lambda_1 = 0.1$ and $\lambda_2 = 0.01$.

\end{document}